\begin{document}

\title{A regularization-patching dual quaternion optimization method for solving the hand-eye calibration problem}
\author{
    Zhongming Chen\footnote{%
    Department of Mathematics, School of Science, Hangzhou Dianzi University, Hangzhou 310018 China,
    ({\tt zmchen@hdu.edu.cn}).  This author's work was partially supported by Zhejiang Provincial Natural Science Foundation of China (No. LY22A010012).  }
    \and  \
    Chen Ling\footnote{%
    Department of Mathematics, School of Science, Hangzhou Dianzi University, Hangzhou 310018 China,
    ({\tt macling@hdu.edu.cn}).  This author's work was partially supported by National Natural Science
Foundation of China (No. 11971138) and City University of Hong Kong (Project 9610034).}
    \and  \
    Liqun Qi\footnote{%
    Department of Applied Mathematics, The Hong Kong Polytechnic University, Hung Hom,
    Kowloon, Hong Kong;
    Department of Mathematics, School of Science, Hangzhou Dianzi University, Hangzhou 310018 China; Center for Intelligent Multidimensional Data Analysis, Science Park, Shatin, Hong Kong, ({\tt maqilq@polyu.edu.hk}).  This author's work was supported by Hong Kong Innovation and Technology Commission (InnoHK Project CIMDA).}
    \and  \
    Hong Yan\footnote{%
    Department of Electrical Engineering, City University of Hong Kong, Kowloon, Hong Kong; Center for Intelligent Multidimensional Data Analysis, Science Park, Shatin, Hong Kong,
    ({\tt h.yan@cityu.edu.hk}).
    This author's work was supported by Hong Kong Innovation and Technology Commission (InnoHK Project CIMDA) and City University of Hong Kong (Project 9610034).}
}
\date{\today}
\maketitle

\begin{abstract}
The hand-eye calibration problem is an important application problem in robot research. Based on the 2-norm of dual quaternion vectors, we propose a new dual quaternion optimization method for the hand-eye calibration problem.  The dual quaternion optimization problem is decomposed to two quaternion optimization subproblems. The first quaternion optimization subproblem governs the rotation of the robot hand.  It can be solved efficiently by the eigenvalue decomposition or singular value decomposition. If the optimal value of the first quaternion optimization subproblem is zero, then the system is rotationwise noiseless, i.e., there exists a ``perfect'' robot hand motion which meets all the testing poses rotationwise exactly. In this case, we apply the regularization technique for solving the second subproblem to minimize the distance of the translation.   Otherwise we apply the patching technique to solve the second quaternion optimization subproblem. Then solving the second quaternion optimization subproblem turns out to be solving a quadratically constrained quadratic program.  In this way, we give a complete description for the solution set of hand-eye calibration problems. This is new in the hand-eye calibration literature. The numerical results are also presented to show the efficiency of the proposed method.

%The hand-eye calibration problem is an important application problem in robot research. Based on the 2-norm of dual quaternion vectors, we propose a new dual quaternion optimization model for the hand-eye calibration problem. To solve the problem, we first need to check whether the standard part of dual quaternion vectors are zeros or not. As a result, the dual quaternion optimization problem is divided into two quaternion optimization problems. \textcolor{red}{The first problem can be solved efficiently by the eigenvalue decomposition or singular value decomposition. Depending on the optimal value of the first problem, a regularized quaternion optimization problem or a patched quaternion optimazation problem is chosen. Both of them turn out to be quadratically constrained quadratic programs.} We give a complete description for the solution set of hand-eye calibration problems. The numerical results are also presented to show the efficiency of proposed methods.

\medskip

  \medskip

  \textbf{Key words.} Dual quaternion optimization, hand-eye calibration, rotation, noise, regularization, patching.

  %\medskip
  %\textbf{AMS subject classifications.}
\end{abstract}

\renewcommand{\Re}{\mathds{R}}
\newcommand{\rank}{\mathrm{rank}}
\renewcommand{\span}{\mathrm{span}}
\newcommand{\X}{\mathcal{X}}
\newcommand{\A}{\mathcal{A}}
\newcommand{\I}{\mathcal{I}}
\newcommand{\B}{\mathcal{B}}
\newcommand{\C}{\mathcal{C}}
\newcommand{\OO}{\mathcal{O}}
\newcommand{\e}{\mathbf{e}}
\newcommand{\0}{\mathbf{0}}
\newcommand{\dd}{\mathbf{d}}
\newcommand{\ii}{\mathbf{i}}
\newcommand{\jj}{\mathbf{j}}
\newcommand{\kk}{\mathbf{k}}
\newcommand{\va}{\mathbf{a}}
\newcommand{\vb}{\mathbf{b}}
\newcommand{\vc}{\mathbf{c}}
\newcommand{\vg}{\mathbf{g}}
\newcommand{\vr}{\mathbf{r}}
\newcommand{\vt}{\rm{vec}}
\newcommand{\vx}{\mathbf{x}}
\newcommand{\vy}{\mathbf{y}}
\newcommand{\vu}{\mathbf{u}}
\newcommand{\vv}{\mathbf{v}}
\newcommand{\y}{\mathbf{y}}
\newcommand{\vz}{\mathbf{z}}
\newcommand{\T}{\top}

\newtheorem{Thm}{Theorem}[section]
\newtheorem{Def}[Thm]{Definition}
\newtheorem{Ass}[Thm]{Assumption}
\newtheorem{Lem}[Thm]{Lemma}
\newtheorem{Prop}[Thm]{Proposition}
\newtheorem{Cor}[Thm]{Corollary}

\newpage
\section{Introduction}
The hand-eye calibration problem is an important part of robot calibration, which has wide applications in aerospace, medical, automotive and industrial fields \cite{JLL21, EFI21}. The problem is to determine the homogeneous matrix between the robot gripper and a camera mounted rigidly on the gripper or between a robot base and the world coordinate system. In 1989, Shiu and Ahmad \cite{SA89} and Tsai and Lenz \cite{TL89} used one motion (two poses) to formulate the hand-eye calibration problem as solving a matrix equation
\begin{equation} \label{hand_eye_eq1}
    A X = X B ,
\end{equation}
where $X$ is the unknown homogeneous transformation matrix from the gripper (hand) to the camera (eye), $A$ is the measurable homogeneous transformation matrix of the robot hand from its first to second position, and $B$ is the measurable homogeneous transformation matrix of the attached camera and also, from its first to second position.
To allow the simultaneous estimation of both the transformations from the robot base frame to the world frame and from the robot hand frame to sensor frame, Zhuang, Roth and Sudhaker \cite{ZRS94} derived another homogeneous transformation equation
\begin{equation} \label{hand_eye_eq2}
    A X = Z B ,
\end{equation}
where $X$ is the transformation matrix from the gripper to the camera, $Z$ is the transformation matrix from the robot base to the world coordinate system, $A$ is the transformation matrix from the robot base to the gripper and $B$ is the transformation matrix from the world base to the camera. It is worth mentioning that there are other kinds of mathematical models for hand-eye calibration problem. In this paper, we only focus on the models (\ref{hand_eye_eq1}) and (\ref{hand_eye_eq2}).

% Solution methods
Over the years, many different methods and solutions are developed for the hand-eye calibration problem. Based on how the rotation and translation parameters are estimated, these approaches are broadly divided into two categories: separable solutions and simultaneous solutions.
The separable solutions arise from solving the orientational component separately from the positional component. By using rotation matrix and translation vector to represent homogeneous transformation matrices, the hand-eye calibration equation is decomposed into rotation equation and position equation. The rotation parameters are first estimated. After that, the translation vectors could be estimated by solving a linear system. The different techniques that focus on the parametrization of rotation matrices include angle-axis \cite{SA89, TL89, W92}, Lie algebra \cite{PM94}, quaternions \cite{CK88, CK91, HD95}, Kronecker product \cite{LM08, S13} and so on. The main drawback in these methods is that rotation estimation errors propagate to position estimation errors.

On the other hand, the simultaneous solutions arise from simultaneously solving the orientational component and the positional component. The rotation and translation parameters are solved either analytically or by means of numerical optimization. For analytical approaches, many techniques were proposed including quaternions \cite{LC95}, screw motion \cite{C91}, Kronecker product \cite{AHE99}, dual tensor \cite{CB16}, dual Lie algebra \cite{CC19} and so on. The approaches based on numerical optimization include  Levenberg-Marquardt algorithm \cite{ZS93, RDL97}, gradient/Newton method \cite{GKP03}, linear matrix inequality \cite{HHP14}, alternative linear programming \cite{Z19}, and pseudo-inverse \cite{ZZY17}.
For more details about solution methods for hand-eye calibration problem, one can refer to \cite{EFI21, SEH12} and references therein.

%quaternion and dual quaternion methods
Among the solution methods for hand-eye calibration problem, the technique of dual quaternions was used to represent rigid motions by Daniilidis and  Bayro-Corrochano \cite{DB96}. Based on the dual-quaternion parameterization, a simultaneous solution for the hand-eye problem was proposed by using the singular value decomposition \cite{DB96, D99}. After that, many solution methods based on dual quaternions were proposed \cite{SVN03, LWW10, MB10, US16, LLDL18}. It has been shown that the dual quaternion representation gives a stable way to estimate the solution.

The existing methods for the hand-eye calibration problem used to produce solutions in general cases£¬ i.e., the rotation axes are not parallel.   There lacks a complete description of the solution set of the hand-eye calibration problem.   %The existing optimization methods for this problem are more or less established on some approximation bases.

In this paper, we propose a new dual quaternion optimization method for the hand-eye calibration problem based on the 2-norm of dual quaternion vectors, aiming to give a complete description of the solution set of the hand-eye calibration problem.

The theoretical base of dual quaternion optimization was established in \cite {QLY22}, where a total order for dual numbers, the magnitude of a dual quaternion and the norm for dual quaternion vectors were proposed. Then, a two-stage solution scheme for equality constrained dual quaternion optimization problems was proposed in \cite{Q22}, with the hand-eye calibration problem and the simultaneous localization and mapping problem as application examples.  It was shown in \cite{Q22} that an equality constrained dual quaternion optimization problem could be solved by solving two quaternion optimization subproblems.

In the solution scheme of \cite{Q22}, the optimization solution set of the first quaternion optimization subproblem is designed as a constraint of the second quaternion optimization subproblem.   This poses a challenge for implementing such a two-stage solution scheme in practice.   In this paper, we propose a regularization-patching method to solve such a dual quaternion optimization problem arising from the hand-eye calibration problem.   To apply the two-stage scheme of \cite{Q22} to the hand-eye calibration problem, we may solve the first quaternion optimization subproblem efficiently by the eigenvalue decomposition or singular value decomposition. If the optimal value of the first subproblem is equal to zero, a regularization function is used to solve the second quaternion optimization subproblem. Otherwise, the solution of the second subproblem is determined by solving a patched quaternion optimization problem.
In fact, the optimal value of the first subproblem is equal to zero if and only if there exists a ``perfect'' robot hand motion which meets all the testing poses exactly.  In this case, we say that the hand-eye calibration system is {\bf rotationwise noiseless}.  The flow chart of proposed method is presented in Figure \ref{fig0}. In this way, we give a complete description for the solution set of the hand-eye calibration problem. This is new in the hand-eye calibration literature and should be useful in applications.

\begin{figure}[!t]
\centering
\includegraphics[width=0.95\textwidth]{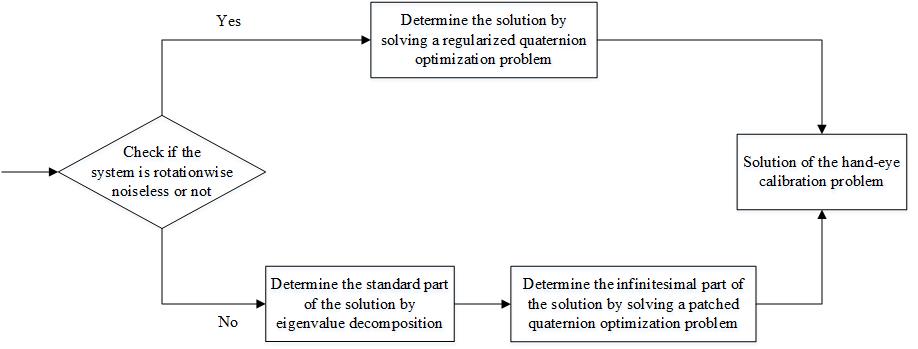}
\caption{The flow chart of proposed method.}
\label{fig0}
\end{figure}

In the next section, we present some preliminary knowledge on dual numbers, quaternions and dual quaternions. Based on dual quaternion optimization, the reformulations and analysis for hand-eye calibration equations $AX=XB$ and $AX=ZB$ are given in Sections 3 and 4, respectively. In Section 5, we present the numerical results to show the efficiency of proposed methods. Some final remarks are made in Section 6.

Throughout the paper, the sets of real numbers, dual numbers, quaternion numbers and  dual quaternion numbers are denoted by $\mathbb R$, $\mathbb D$, $\mathbb Q$ and $\mathbb{DQ}$, respectively.
The sets of $n$-dimensional real vectors, quaternion vectors and dual quaternion vectors are denoted by ${\mathbb R}^n$, ${\mathbb Q}^n$ and ${\mathbb{DQ}}^n$, respectively. Scalars, vectors and matrices are denoted by lowercase letters, bold lowercase letters and capital letters, respectively.

\section{Preliminaries}

\subsection{Dual numbers}

A dual number $q \in \mathbb{D}$ can be written as $q = q_{st} + q_\I\epsilon$, where $q_{st}, q_\I \in \mathbb{R}$  and $\epsilon$ is the infinitesimal unit satisfying $\epsilon^2 = 0$.   We call $q_{st}$ the standard part of $q$, and $q_\I$ the infinitesimal part of $q$.  Dual numbers can be added component-wise, and multiplied by the formula
$$
(p_{st} + p_\I\epsilon) (q_{st} + q_\I\epsilon) = p_{st} q_{st} +(p_{st} q_\I + p_\I q_{st})\epsilon .
$$
The dual numbers form a commutative algebra of dimension two over the reals.
%If $q_{st} \not = 0$, we say that $q$ is appreciable, otherwise, we say that $q$ is infinitesimal.
The absolute value of $q = q_{st} + q_{\I} \epsilon \in \mathbb{D}$ is defined as
\begin{equation*}
|q| = \left\{\begin{aligned} |q_{st}| + \frac{ q_{st} q_{\I} }{|q_{st}|}\epsilon, & \quad {\rm if}\ q_{st} \not = 0, \\
|q_{\I}|\epsilon, & \quad {\rm otherwise.} \end{aligned}\right.
\end{equation*}

A total order ``$\le$" for dual numbers was introduced in \cite{QLY22}.  Given two dual numbers $p, q \in \mathbb D$, $p = p_{st} + p_\I\epsilon$, $q = q_{st} + q_\I\epsilon$, where $p_{st}, p_\I, q_{st}, q_\I \in \mathbb{R}$, we say that $p \le q$, if either $p_{st} < q_{st}$, or $p_{st} = q_{st}$ and $p_\I \le q_\I$.  In particular, we say that $p$ is positive, nonnegative, nonpositive or negative, if $p > 0$, $p \ge 0$, $p \le 0$ or $p < 0$, respectively.

%\textcolor{red}{Suppose that $q = q_{st} + q_\I \epsilon \in \mathbb D$ and $q \geq 0$, the square root is defined as
%\begin{equation}
%\sqrt{q} = \left\{\begin{aligned} \sqrt{q_{st}} + \frac{q_\I}{2\sqrt{q_{st}}}\epsilon, & \quad {\rm if}\ q_{st} > 0, \\
%\sqrt{q_{\I}}\epsilon, & \quad {\rm otherwise.} \end{aligned}\right.
%\end{equation}
%Conventionally, we have $\sqrt{0} = 0$.}

\subsection{Quaternion numbers}

A quaternion number $q \in \mathbb{Q}$ has the form
$ q = q_0  + q_1 \ii + q_2 \jj + q_3 \kk , $
where $q_0, q_1, q_2, q_3 \in \mathbb{R}$ and $\ii, \jj, \kk$ are three imaginary units of quaternions satisfying
$$
\ii^2 = \jj^2 = \kk^2 = \ii \jj\kk=-1, \quad  \ii \jj = -\jj \ii = \kk,  \quad \jj\kk=-\kk\jj=\ii, \quad \kk\ii=-\ii\kk=\jj .
$$
The conjugate of $q$ is the quaternion $q^*=q_0  - q_1 \ii - q_2 \jj - q_3 \kk$. The scalar part of $q$ is $\text{Sc}(q) = \frac{1}{2}(q+q^*)=q_0$. % and the vector part of $q$ is $\text{Ve}(q)=\frac{1}{2} (q-q^*) = q_1 \ii + q_2 \jj + q_3 \kk$.
Clearly, $\text{Sc}(q^*) = \text{Sc}(q)$ and $(p q)^* = q^* p^*$ for any $p, q \in \mathbb{Q}$. The multiplication of quaternions is associative and distributive over vector addition, but is not commutative. The magnitude of $q$ is $$ |q| = \sqrt{q q^*} = \sqrt{q^* q}=\sqrt{q_0^2 + q_1^2 +q_2^2 + q_3^2}.$$

The quaternion $q \in \mathbb{Q}$ is called a unit quaternion if $|q|=1$. It is well-known \cite{WZ14} that the unit quaternion
$$ q = \cos\left(\frac{\theta}{2}\right) + \sin\left(\frac{\theta}{2}\right) n_1 \ii + \sin\left(\frac{\theta}{2}\right) n_2 \jj + \sin\left(\frac{\theta}{2}\right) n_3 \kk ,$$
can be used to described the rotation around a unit axis ${\bf n} = (n_1, n_2, n_3)^\top \in \mathbb{R}^3$ with an angle of $-\pi \leq \theta \leq \pi $.
On the other hand, given a unit quaternion $q = q_0  + q_1 \ii + q_2 \jj + q_3 \kk \in \mathbb{Q}$, the rotation matrix $R$ can be obtained by
\begin{equation}\label{eq_q2rot}
    R= \left(\begin{array}{ccc}
      q_0^2 + q_1^2-q_2^2-q_3^2 & 2(q_1 q_2 -q_0 q_3)       & 2(q_1 q_3 + q_0 q_2) \\
      2(q_1 q_2 +q_0 q_3)       & q_0^2 - q_1^2+q_2^2-q_3^2 & 2(q_2 q_3 - q_0 q_1) \\
      2(q_1 q_3 -q_0 q_2)       & 2(q_2 q_3 + q_0 q_1)      &  q_0^2 - q_1^2 - q_2^2 + q_3^2
    \end{array}  \right).
\end{equation}

For any $a = a_0 + a_1 \ii + a_2 \jj + a_3 \kk  \in \mathbb{Q}$, denote
$ \overrightarrow{a}= (a_0, a_1, a_2, a_3)^\top $
and
$$
M(a) = \left( \begin{array}{rrrr}
a_0 & -a_1  & -a_2  & -a_3  \\
a_1 &  a_0  & -a_3  &  a_2  \\
a_2 &  a_3  & a_0   & -a_1  \\
a_3 & -a_2  & a_1   & a_0  \end{array}
\right), \quad
W(a) = \left( \begin{array}{rrrr}
a_0 & -a_1  & -a_2  & -a_3  \\
a_1 &  a_0  &  a_3  & -a_2  \\
a_2 & -a_3  &  a_0  &  a_1  \\
a_3 &  a_2  & -a_1  &  a_0  \end{array}
\right).
$$
Clearly, $|a| = \| \overrightarrow{a} \|_2$. By direct calculations, we have the following propositions.
\begin{Prop}\label{p2-1}
For any $a = a_0 + a_1 \ii + a_2 \jj + a_3 \kk  \in \mathbb{Q}$ and $b = b_0 + b_1 \ii + b_2 \jj + b_3 \kk  \in \mathbb{Q}$, the following statements hold:
\begin{itemize}
    \item[(i)] ${\rm Sc}(r _1 a + r_2 b) = r_1 {\rm Sc}(a) + r_2 {\rm Sc}(b)$ for any $r_1, r_2 \in \mathbb{R}$.
    \item[(ii)] ${\rm Sc}(a^* b) = {\rm Sc}(a b^*) ={\rm Sc}(b^* a) = {\rm Sc}(b a^*) = \overrightarrow{a}^\top \overrightarrow{b}$.
    \item[(iii)] $M(a^*) = M(a)^\top$, $W(a^*) = W(a)^\top$.
    \item[(iv)] $ \overrightarrow{ab} = M(a) \overrightarrow{b} = W(b) \overrightarrow{a} $.
    \item[(v)] $M(a)^\top M(a) = W(a)^\top W(a) = \| \overrightarrow{a} \|_2^2 I_{4\times 4}$, where $I_{4\times 4}$ is the identity matrix of size $4 \times 4$.
\end{itemize}
\end{Prop}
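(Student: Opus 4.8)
The plan is to establish the five statements by direct component-wise computation, but organized so that the later parts reuse the earlier ones rather than being verified from scratch. First I would dispose of (i): since quaternion addition and real scalar multiplication act coordinate-wise and $\text{Sc}(q) = q_0$ merely reads off the real coordinate, linearity is immediate; equivalently one may invoke $\text{Sc}(q) = \tfrac{1}{2}(q + q^*)$ together with the linearity of conjugation. For (ii), I would expand $a^* b$ using $a^* = a_0 - a_1\ii - a_2\jj - a_3\kk$ and collect the real part, obtaining $\text{Sc}(a^* b) = a_0 b_0 + a_1 b_1 + a_2 b_2 + a_3 b_3 = \overrightarrow{a}^\top \overrightarrow{b}$. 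The same expansion applied to $ab^*$ yields the identical real part, so $\text{Sc}(a^* b) = \text{Sc}(ab^*)$; the two remaining equalities then follow from $\text{Sc}(q^*) = \text{Sc}(q)$ applied to $(a^* b)^* = b^* a$ and $(ab^*)^* = b a^*$.

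For (iii), I would obtain $M(a^*)$ by the substitution $a_1 \mapsto -a_1$, $a_2 \mapsto -a_2$, $a_3 \mapsto -a_3$ in the defining array of $M(a)$ and then check entry-by-entry that the result coincides with the transpose of $M(a)$; the same substitution settles $W(a^*) = W(a)^\top$. This is pure inspection of the two $4\times 4$ templates.

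Statement (iv) is the computational core. I would multiply $ab$ out in the quaternion algebra using the relations among $\ii, \jj, \kk$, read off the four coordinates of $\overrightarrow{ab}$, and verify that they equal $M(a)\overrightarrow{b}$ and $W(b)\overrightarrow{a}$, respectively. This is where the bulk of the bookkeeping lives, since all sixteen entries together with their signs must be matched; the two matrices $M$ and $W$ encode left- and right-multiplication, and the noncommutativity of quaternion multiplication is precisely what distinguishes them. I expect the sign tracking in $W(b)\overrightarrow{a}$ to be the most error-prone step and hence the main obstacle, so I would cross-check it against the basis products (e.g. $\ii\jj = \kk$ versus $\jj\ii = -\kk$) before trusting the general formula.

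Finally, (v) I would derive structurally rather than by brute force, which is the cleanest part of the argument. From (iv) one reads off that $M$ is multiplicative, $M(ab) = M(a)M(b)$, by applying $\overrightarrow{(ab)c} = M(ab)\overrightarrow{c} = M(a)M(b)\overrightarrow{c}$ for every $c$, while $W$ is anti-multiplicative, $W(bc) = W(c)W(b)$, by the analogous use of $\overrightarrow{(ab)c} = W(c)W(b)\overrightarrow{a}$. Combining these with (iii) gives $M(a)^\top M(a) = M(a^*)M(a) = M(a^* a)$ and $W(a)^\top W(a) = W(a^*)W(a) = W(a a^*)$. Since $a^* a = a a^* = |a|^2 = \|\overrightarrow{a}\|_2^2$ is a real scalar and $M(\lambda) = W(\lambda) = \lambda I_{4\times 4}$ for any $\lambda \in \mathbb{R}$, both products equal $\|\overrightarrow{a}\|_2^2 I_{4\times 4}$, completing the proof.
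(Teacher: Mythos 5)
Your proof is correct. For (i)--(iv) it coincides with the paper's route: the paper gives no written argument at all, introducing the proposition only with ``By direct calculations,'' and your component-wise expansions (reading off real parts for (i)--(ii), the sign substitution $a_1,a_2,a_3\mapsto -a_1,-a_2,-a_3$ for (iii), and the full bookkeeping match for (iv)) are precisely what that phrase must mean. Where you genuinely depart from the paper is (v): rather than multiplying the $4\times 4$ templates $M(a)^\top M(a)$ and $W(a)^\top W(a)$ entry by entry, you first extract from (iv) and associativity the homomorphism property $M(ab)=M(a)M(b)$ and the anti-homomorphism property $W(bc)=W(c)W(b)$, then combine these with (iii) to get $M(a)^\top M(a)=M(a^*a)=M(|a|^2)=\|\overrightarrow{a}\|_2^2 I_{4\times 4}$ and $W(a)^\top W(a)=W(aa^*)=\|\overrightarrow{a}\|_2^2 I_{4\times 4}$, using that $M(\lambda)=W(\lambda)=\lambda I_{4\times 4}$ for real $\lambda$ and that $a^*a=aa^*=|a|^2$ is real. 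This is sound (the direction of the anti-homomorphism is handled correctly: $W(aa^*)=W(a^*)W(a)$), and it buys two things: it replaces sixteen-entry matrix products with a two-line algebraic identity, and it isolates the multiplicativity facts about $M$ and $W$, which are of independent use. The cost is that it leans on associativity-based identities the paper never states; the paper's implicit brute-force check is more elementary and self-contained, though more tedious. Either way the proposition is fully established.
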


\begin{Prop}\label{p2-2}
If $a$ and $b$ are two quaternion numbers satisfying $\text{Sc}(a^* b) = 0$, then for any $q \in \mathbb{Q}$, we have
$ \text{Sc} (q^* a^* b q) = \text{Sc} (q^* b^*  a q)  = 0.$
\end{Prop}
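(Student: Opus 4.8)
The plan is to reduce both identities to the single observation that a quaternion whose scalar part vanishes is ``purely imaginary'', and that conjugation by $q$ preserves this property. First I would set $c = a^* b$. The hypothesis $\text{Sc}(a^* b) = 0$ says exactly that $c_0 = 0$ when $c = c_0 + c_1\ii + c_2\jj + c_3\kk$, and this is equivalent to $c^* = -c$; indeed, for any quaternion $x$ one has $\text{Sc}(x) = \frac{1}{2}(x + x^*)$, so $\text{Sc}(x) = 0$ if and only if $x^* = -x$.

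The key step is to show that the map $x \mapsto q^* x q$ sends quaternions with zero scalar part to quaternions with zero scalar part. Using the conjugation-reversal rule $(pqr)^* = r^* q^* p^*$ together with $(q^*)^* = q$, I would compute
\[
(q^* c q)^* = q^* c^* (q^*)^* = q^* c^* q = -\, q^* c q ,
\]
where the last equality uses $c^* = -c$. Hence $q^* c q$ again has vanishing scalar part, that is, $\text{Sc}(q^* a^* b q) = \text{Sc}(q^* c q) = 0$.

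For the second identity I would note that $b^* a = (a^* b)^* = c^* = -c$, so $q^* b^* a q = -\,q^* c q$ and therefore $\text{Sc}(q^* b^* a q) = -\,\text{Sc}(q^* c q) = 0$. Alternatively, Proposition~\ref{p2-1}(ii) gives $\text{Sc}(b^* a) = \overrightarrow{a}^\top \overrightarrow{b} = \text{Sc}(a^* b) = 0$, so the same pure-quaternion argument applies verbatim to $b^* a$.

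The computation is short and there is no substantial obstacle; the only points that deserve care are the equivalence $\text{Sc}(x) = 0 \iff x^* = -x$, and the correct use of the reversal rule $(pqr)^* = r^* q^* p^*$, which is what guarantees that $(q^* c q)^*$ collapses to $q^* c^* q$ rather than to a product with the factors permuted.
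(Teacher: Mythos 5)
Your proof is correct and follows essentially the same route as the paper: both arguments hinge on the equivalence $\text{Sc}(a^*b)=0 \iff a^*b + b^*a = 0$ (i.e.\ $(a^*b)^* = -a^*b$) together with the reversal rule for conjugation, which shows that $q^*(a^*b)q$ inherits the vanishing scalar part. The paper phrases this via linearity of $\text{Sc}$ applied to $q^*\bigl(\tfrac{a^*b+b^*a}{2}\bigr)q = 0$, while you phrase it as conjugation preserving pure quaternions, but the content is identical.
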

\begin{proof}
Since $\text{Sc}(a^* b) = 0$, we have $a^* b + b^*a = 0$. According to Proposition \ref{p2-1}, one can obtain that $ \text{Sc} (q^* a^* b q) = \text{Sc} (q^* b^*  a q)  = \text{Sc} \left( q^* (\frac{a^* b + b^* a}{2}) q \right) = 0$ for any $q \in \mathbb{Q}.$   \end{proof}

%quaternion vector and its norm
Next we introduce the $2$-norm for quaternion vectors, which can be found in \cite{QLY22}. Denote $\vx = (x_1, x_2, \cdots, x_n)^\top \in {\mathbb {Q}}^n$ for quaternion vectors.  The $2$-norm of $\vx \in {\mathbb {Q}}^n$ is defined as
\begin{equation*} %\label{2-norm-quaternion}
\|\vx \|_2 = \sqrt{\sum_{i=1}^n |x_i|^2 } = \sqrt{\sum_{i=1}^n \|\overrightarrow{x_i} \|_2^2 } .
\end{equation*}
The conjugate transpose of $\vx$ is defined as $\vx^* =  (x_1^*, x_2^*, \cdots, x_n^*)$. More details about quaternions and quaternion vectors could be found in \cite{WLZZ18}.
%Clearly, for any $\vx, \vy \in \mathbb{Q}^n$, we have $$ {\rm Sc}(\vx^* \vy) = \sum_{i=1}^n {\rm Sc} (x_i^* y_i) = \sum_{i=1}^n  \overrightarrow{x_i}^\top \overrightarrow{y_i}   \leq \| \vx\|_2 \| \vy \|_2. $$

%According to Proposition 6.3 of \cite{QLY22}, it holds that
%$$
%\| \vx \|_2 = \| \vx_{st} \|_2 + \frac{{\vx}_{st}^* {\vx_{\I}} + {\vx}_{\I}^*{\vx}_{st}}{2 \| {\vx}_{st}(x) \|_2} \epsilon
%$$
%for any $\vx \in \mathbb{DQ}^n$ with $\vx_{st} \neq {\bf 0}$.

\subsection{Dual quaternion numbers}

A dual quaternion number $q \in \mathbb{DQ}$ has the form $ q = q_{st} + q_\I \epsilon$, where $q_{st}, q_\I \in \mathbb{Q}$. %If $q_{st} \neq 0$, the dual quaternion $q$ is called appreciable.
The conjugate of $q = q_{st} + q_\I \epsilon$ is $q^* = q_{st}^* + q_\I^* \epsilon$.
%According to Proposition \ref{p2-1}, we have
%$$q^* q = q^*_{st} q_{st} + (q_{st}^* q_{\I} + q_{\I}^* q_{st}) \epsilon = |q_{st}|^2 + 2\text{Sc}( q_{st}^* q_{\I} ) \epsilon = |q_{st}|^2 + 2\text{Sc}(  q_{\I} q_{st}^*) \epsilon = q q^*.$$
The magnitude of a dual quaternion number $q = q_{st} + q_\I \epsilon$ is defined as
\begin{equation*}
|q| = \left\{\begin{aligned} |q_{st}| + \frac{{\rm Sc}( q_{st}^* q_{\I})}{|q_{st}|}\epsilon, & \quad {\rm if}\ q_{st} \not = 0, \\
|q_{\I}|\epsilon, & \quad {\rm otherwise.} \end{aligned}\right.
\end{equation*}
The dual quaternion number $q \in \mathbb{DQ}$ is called a unit dual quaternion if $|q|=1$.
Note that $q = q_{st} + q_{\I} \epsilon \in \mathbb{DQ}$ is a unit dual quaternion if and only if
$ q_{st}^* q_{st}= 1$ and $ q_{st}^* q_{\I} + q_{\I}^* q_{st} =  0 $.
According to Proposition \ref{p2-2}, we have the following result.
\begin{Cor} \label{c2-3}
If $q = q_{st} + q_{\I} \epsilon \in \mathbb{DQ}$ is a unit dual quaternion, then $\text{Sc}(q_{st}^* q_{\I}) = \text{Sc}(q_{\I}^* q_{st})  = 0$ and for any $a \in \mathbb{Q}$, we have
$ \text{Sc}( a^* q_{st}^* q_{\I} a ) = \text{Sc}( a^* q_{\I}^* q_{st} a )  = 0. $
\end{Cor}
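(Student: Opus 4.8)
The plan is to obtain the first pair of identities directly from the characterization of a unit dual quaternion recalled immediately above the statement, and then to read off the second pair as a specialization of Proposition \ref{p2-2}.

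First I would use the fact that $q = q_{st} + q_{\I}\epsilon$ is a unit dual quaternion if and only if $q_{st}^* q_{st} = 1$ and $q_{st}^* q_{\I} + q_{\I}^* q_{st} = 0$; it is the second of these conditions that carries all the content I need. Applying the conjugation rule $(pq)^* = q^* p^*$ together with $(q_{st}^*)^* = q_{st}$, I note that $q_{\I}^* q_{st} = (q_{st}^* q_{\I})^*$, so the infinitesimal defining equation rewrites as $q_{st}^* q_{\I} + (q_{st}^* q_{\I})^* = 0$. Since $\text{Sc}(x) = \frac{1}{2}(x + x^*)$ by definition of the scalar part, dividing by two yields $\text{Sc}(q_{st}^* q_{\I}) = 0$, and the companion identity $\text{Sc}(q_{\I}^* q_{st}) = 0$ follows at once from $\text{Sc}(x^*) = \text{Sc}(x)$ applied with $x = q_{st}^* q_{\I}$.

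For the second pair of identities I would invoke Proposition \ref{p2-2}, taking its two fixed quaternion arguments to be $q_{st}$ and $q_{\I}$ and letting its free quaternion be the given $a \in \mathbb{Q}$. The hypothesis $\text{Sc}(q_{st}^* q_{\I}) = 0$ required by that proposition has just been established, so it delivers $\text{Sc}(a^* q_{st}^* q_{\I} a) = \text{Sc}(a^* q_{\I}^* q_{st} a) = 0$ for every $a$, which is exactly the assertion.

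The argument is short, and I do not expect a genuine obstacle: the only point demanding care is the bookkeeping of conjugates, that is, recognizing that the infinitesimal defining equation of a unit dual quaternion already encodes the vanishing of $\text{Sc}(q_{st}^* q_{\I})$, and then matching the variables correctly when specializing Proposition \ref{p2-2} (its free variable plays the role of $a$, while its two fixed quaternions become $q_{st}$ and $q_{\I}$).
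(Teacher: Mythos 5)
Your proof is correct and follows exactly the paper's intended route: the paper derives this corollary by combining the characterization of unit dual quaternions ($q_{st}^* q_{st}=1$ and $q_{st}^* q_{\I} + q_{\I}^* q_{st} = 0$, which gives $\text{Sc}(q_{st}^* q_{\I})=0$ since $2\,\text{Sc}(x) = x + x^*$) with Proposition \ref{p2-2} applied to $q_{st}$, $q_{\I}$, and the free quaternion $a$. Your variable matching and conjugate bookkeeping are exactly what the paper leaves implicit.
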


%Relationship between unit dual quaternion and homogeneous tranformation matrix.
It has been shown that the 3D motion of a rigid body can be represented by a unit dual quaternion \cite{D99}. Consider a rigid motion in $SE(3)$ represented by a $4 \times 4$ homogeneous transformation matrix
\begin{equation}\label{eq_HTM}
T = \left( \begin{array}{cc} R & {\bf t} \\ {\bf 0}^\top & 1  \end{array}\right) ,
\end{equation}
where $R \in \mathbb{R}^{3 \times 3}$ is the rotation matrix about an axis through the origin and ${\bf t} \in \mathbb{R}^3$ is the translation vector. Let {$q_{st} \in \mathbb{Q}$} be the unit quaternion encoding the rotation matrix $R$ and let $t \in \mathbb{Q}$ be the quaternion satisfying $\overrightarrow{t} = \left( \begin{array}{c}
     0  \\
     {\bf t}
\end{array} \right)$. Then the transformation matrix $T$ is represented by the dual quaternion
$
{q = q_{st} +  q_\I \epsilon} ,
$
where {$q_\I = \frac{1}{2} t q_{st}$}. It is not difficult to check that $q \in \mathbb{DQ}$ is a unit dual quaternion since
$$ {\rm Sc}(q_{st}^* q_\I) = \frac{1}{2} {\rm Sc}(q_{st}^* t q_{st}) =0  .$$
On the other hand, given a unit dual quaternion $q = q_{st} + q_\I \epsilon \in \mathbb{DQ}$, the corresponding homogeneous transformation matrix $T$ can be obtained by (\ref{eq_HTM}), where the rotation matrix $R \in \mathbb{R}^{3 \times 3}$ can be derived from the unit quaternion $q_{st}$ according to (\ref{eq_q2rot}) and the translation vector ${\bf t} \in  \mathbb{R}^{3 }$ can be derived from
\begin{equation}\label{eq_q2tv}
    \left( \begin{array}{c}
     0  \\
     {\bf t}
\end{array} \right) = \overrightarrow{2 q_\I q_{st}^*} .
\end{equation}
It follows that $\|{\bf t}\|_2^2 = 4 q_\I q_{st}^* q_{st} q_\I^* =  4|q_\I|^2$. In other words, for a unit dual quaternion, the magnitude of its infinitesimal part is half of the length of the corresponding translation vector.

Denote $\vx = (x_1, x_2, \cdots, x_n)^\top \in {\mathbb {DQ}}^n$ for dual quaternion vectors.
We may also write
$$\vx = \vx_{st} + \vx_\I\epsilon,$$
where $\vx_{st}, \vx_\I \in {\mathbb Q}^n$.
%The vector $\vx \in {\mathbb {DQ}}^n$ is called appreciable if at least one of its components is appreciable, i.e., $\vx_{st} \neq \0$.
The $2$-norm of $\vx \in {\mathbb {DQ}}^n$ is defined as
\begin{equation} \label{2-norm}
\|\vx \|_2 = \left\{\begin{aligned}\sqrt{\sum_{i=1}^n |x_i|^2}, & \quad {\rm if}\ \vx_{st} \not = \0, \\
\|\vx_\I\|_2\epsilon, & \quad {\rm otherwise.} \end{aligned}\right.
\end{equation}
Denote by $\vx^* :=  (x_1^*, x_2^*, \cdots, x_n^*)$ the conjugate transpose of $\vx \in \mathbb{DQ}^n$. According to Proposition 6.3 of \cite{QLY22}, it holds that
\begin{equation}\label{eq_dqnorm}
\| \vx \|_2 = \| \vx_{st} \|_2 + \frac{ \text{Sc}({\vx}_{st}^* {\vx_{\I}})}{ \| {\vx}_{st} \|_2} \epsilon ,
\end{equation}
for any $\vx \in \mathbb{DQ}^n$ with $\vx_{st} \neq {\bf 0}$.

%\newpage

\section{Hand-Eye Calibration Equation $AX=XB$}
%\subsection{$AX=XB$}
The hand-eye calibration problem is to find the matrix $X$ such that
\begin{equation} \label{eq3-1}
 A^{(i)} X =X B^{(i)}  ,
\end{equation}
for $i=1,2,\ldots, n$, where $X$ is transformation matrix from the gripper (hand) to the camera (eye), $A^{(i)}$ is the transformation matrix between the grippers of two different poses and $B^{(i)}$ the transformation matrix between the cameras of two different poses.
%Reformulation by dual quaternion numbers. Notations:
%\begin{itemize}
%    \item ${a}^{(i)}={a}^{(i)}_{st} + a^{(i)}_{\I} \epsilon, \  {b}^{(i)} = b^{(i)}_{st} + b^{(i)}_{\I} \epsilon, \ x=x_{st} + x_{\I} \epsilon \in \mathbb{DQ}$
%    \item $f_i({x}) = {a}^{(i)} {x} - {x} {b}^{(i)} = \left(a^{(i)}_{st} x_{st}-x_{st}b_{st}^{(i)} \right) + \left( a_{st}^{(i)}x_{\I} +a_{\I}^{(i)}x_{st} - x_{st}b_{\I}^{(i)} - x_{\I}b_{st}^{(i)}   \right) \epsilon$
%    \item ${\bf f}({x}) = \left( f_1(x), f_2(x), \ldots, f_n(x) \right)^\top = {\bf f}_{st}({x}) + {\bf f}_{\I}({x})\epsilon$
%\end{itemize}
The transformation matrices $X$, $A^{(i)}$ and $B^{(i)}$ are encoded with the unit dual quaternions
$$ x=x_{st} + x_{\I} \epsilon, \quad {a}^{(i)}={a}^{(i)}_{st} + a^{(i)}_{\I} \epsilon, \quad  {b}^{(i)} = b^{(i)}_{st} + b^{(i)}_{\I} \epsilon,  $$
for $i=1,2,\ldots,n$.
Let ${\bf a} = \left(a^{(1)}, a^{(2)}, \ldots, a^{(n)} \right)^\top \in \mathbb{DQ}^n$ and ${\bf b} = \left(b^{(1)}, b^{(2)}, \ldots, b^{(n)} \right)^\top \in \mathbb{DQ}^n$.
The hand-eye calibration problem (\ref{eq3-1}) can be reformulated as the dual quaternion optimization problem
\begin{equation} \label{eq3-2}
\begin{array}{cl}
      \min       & \| {\bf a} x- x {\bf b} \|_2     \\
      \text{s.t.}& |x| = 1 .
\end{array}
\end{equation}

Denote ${\bf f}(x)={\bf a} x- x {\bf b}  \in \mathbb{DQ}^n$. According to (\ref{2-norm}) and (\ref{eq_dqnorm}), we have $$
\| {\bf f}(x) \|_2 = \left\{
\begin{array}{ll}
\| {\bf f}_{st}(x) \|_2 + \frac{ \text{Sc}\left( {\bf f}_{st}^*(x) {\bf f}_{\I}(x) \right) }{ \| {\bf f}_{st}(x) \|_2} \epsilon,   & \text{if } {\bf f}_{st}(x) \neq {\bf 0}, \\
\| {\bf f}_{\I}(x) \|_2 \epsilon,        & \text{otherwise}.
\end{array}
\right.
$$
Problem (\ref{eq3-2}) can be divided to two different cases, which need to be handled very differently. One case is that the standard part of the optimal value of (\ref{eq3-2}) is zero.  Another case is that the standard part of the optimal value of (\ref{eq3-2}) is positive.     Physically, the standard part of the optimal value of (\ref{eq3-2}) is zero if and only if there exists a ``perfect'' robot hand motion $x$, which meets all the $n$ testing poses rotationwise exactly. In this case, we say that system is {\bf rotationwise noiseless}.    The following proposition provides a way to check if the system is rotationwise noiseless or not.
   %we need to check whether there exists a unit dual quaternion $x$ such that ${\bf f}_{st}(x) =0$, which could be done by the following proposition.

\begin{Prop}\label{pro1}
If $\hat{x}$ is an optimal solution of (\ref{eq3-2}), the standard part $\hat{x}_{st}$ is an optimal solution of the quaternion optimization problem
\begin{equation} \label{eq3-3}
\begin{array}{cl}
      \min       & \| {\bf a}_{st} x_{st}- x_{st} {\bf b}_{st} \|_2^2     \\
      \text{s.t.}& |x_{st}| = 1 .
\end{array}
\end{equation}
Hence, the system is rotationwise noiseless if and only if the optimal value of (\ref{eq3-3}) is equal to  zero.
\end{Prop}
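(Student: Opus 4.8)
The plan is to exploit the fact that the dual-number total order compares standard parts first, so that minimizing the dual objective in (\ref{eq3-2}) automatically forces its standard part to be minimized. First I would expand
$$
{\bf f}(x) = {\bf a}x - x{\bf b} = ({\bf a}_{st}x_{st} - x_{st}{\bf b}_{st}) + ({\bf a}_{st}x_\I + {\bf a}_\I x_{st} - x_{st}{\bf b}_\I - x_\I {\bf b}_{st})\epsilon,
$$
which identifies ${\bf f}_{st}(x) = {\bf a}_{st}x_{st} - x_{st}{\bf b}_{st}$ together with the corresponding ${\bf f}_\I(x)$. The crucial observation is that in both branches of the displayed formula for $\|{\bf f}(x)\|_2$ its standard part equals $\|{\bf f}_{st}(x)\|_2$: when ${\bf f}_{st}(x)\neq{\bf 0}$ this is immediate, and when ${\bf f}_{st}(x)={\bf 0}$ the norm is purely infinitesimal, so its standard part is $0 = \|{\bf f}_{st}(x)\|_2$.

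Next I would peel off the standard part using the total order. Suppose $\hat x$ is optimal for (\ref{eq3-2}). If some feasible $x$ satisfied $\|{\bf f}_{st}(x)\|_2 < \|{\bf f}_{st}(\hat x)\|_2$, then, since the dual numbers $\|{\bf f}(x)\|_2$ and $\|{\bf f}(\hat x)\|_2$ would have distinct standard parts, the total order would give $\|{\bf f}(x)\|_2 < \|{\bf f}(\hat x)\|_2$, contradicting optimality. Hence $\hat x_{st}$ minimizes $\|{\bf f}_{st}(x)\|_2 = \|{\bf a}_{st}x_{st} - x_{st}{\bf b}_{st}\|_2$ over all $x$ feasible for (\ref{eq3-2}). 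To transfer this to (\ref{eq3-3}), I would check that the feasible set of (\ref{eq3-2}) projects, via $x \mapsto x_{st}$, exactly onto the feasible set $\{x_{st}: |x_{st}|=1\}$ of (\ref{eq3-3}): any unit dual quaternion satisfies $x_{st}^*x_{st}=1$, hence $|x_{st}|=1$; conversely, any unit quaternion $y_{st}$ extends to the unit dual quaternion $y_{st} + 0\cdot\epsilon$. In particular $\hat x_{st}$ is itself feasible for (\ref{eq3-3}), and the standard parts of feasible points exhaust the whole constraint set of (\ref{eq3-3}).

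Combining these, $\hat x_{st}$ minimizes $\|{\bf a}_{st}x_{st} - x_{st}{\bf b}_{st}\|_2$ subject to $|x_{st}|=1$; since $t \mapsto t^2$ is increasing on $[0,\infty)$, it equally minimizes the squared objective, so $\hat x_{st}$ solves (\ref{eq3-3}). For the final equivalence, the standard part of the optimal value of (\ref{eq3-2}) is $\|{\bf f}_{st}(\hat x)\|_2$, whereas the optimal value of (\ref{eq3-3}) is $\|{\bf f}_{st}(\hat x)\|_2^2$; these two quantities vanish together, which is exactly the asserted characterization of rotationwise noiselessness.

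The step I expect to require the most care is not any computation but the logical passage between the two feasible sets: one must argue both that optimality in the dual problem constrains only the standard part through the total order (so that $\hat x_{st}$ is optimal among all projected points), and that the projection of the dual feasible set is \emph{all} of the quaternion feasible set (so that no minimizer of (\ref{eq3-3}) is missed). If either inclusion failed, the reduction would break down; the explicit extension $y_{st}\mapsto y_{st}+0\cdot\epsilon$ is what makes the projection surjective and closes the argument.
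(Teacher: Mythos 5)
Your proposal is correct and takes essentially the same route as the paper: the paper's proof is a one-line appeal to the total order on dual numbers together with the observation that ${\bf f}_{st}(x)$ depends only on $x_{st}$, and your argument simply fills in the details (the standard part of $\|{\bf f}(x)\|_2$ being $\|{\bf f}_{st}(x)\|_2$ in both branches, the contradiction via the total order, and the surjectivity of the projection $x \mapsto x_{st}$ onto the unit quaternions). The care you take with the feasible-set projection, via the extension $y_{st} \mapsto y_{st} + 0\cdot\epsilon$, is a worthwhile detail that the paper leaves implicit.
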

\begin{proof}
According to the definition of total order for dual numbers, the result could be easily proved since ${\bf f}_{st}(x) = {\bf a}_{st} x_{st}- x_{st} {\bf b}_{st} $ .
\end{proof}

%\red{Physically, if problem (\ref{eq3-2}) is calm, then there exists a perfect rotation which meets all the testing poses exactly.   In general, the situation is not so ideal, i.e., there exits noises. However, near perfect situations always exist, i.e., there are cases when noises are very small.  Thus, we should consider all the cases.}

Denote the optimal set of (\ref{eq3-3}) by $X_{st}$. If the optimal value of (\ref{eq3-3}) is equal to  zero,
we consider the regularized quaternion optimization problem
\begin{equation} \label{eq3-4}
\begin{array}{cl}
      \min       & \| {\bf f}_{\I} (x) \|_2^2  + \gamma (x_{st}^* x_{st} + x_{\I}^* x_\I )   \\
      \text{s.t.}& x_{st} \in X_{st}, \quad x_{st}^*x_{\I} + x_{\I}^*x_{st} = 0 .
\end{array}
\end{equation}
where $\gamma$ is the parameter that balances the loss function and the regularization term. In fact, $x_{st} \in X_{st}$ implies $x_{st}^* x_{st} =1$, and $x_{\I}^* x_\I$ is proportional to the norm square of translation vector. By adding the regularization term, we try to find the best solution with minimal distance of translation. {\bf This explains the role of regularization.}

If the optimal value of (\ref{eq3-3}) is not equal to zero, we consider the quaternion optimization problem
\begin{equation} \label{eq3-5}
\begin{array}{cl}
      \min       &  \text{Sc}\left( {\bf f}_{st}^*(x) {\bf f}_{\I}(x) \right) \\ %+ {\bf f}_{\I}^*(x){\bf f}_{st}(x)    \\
      \text{s.t.}& x_{st} \in X_{st}, \quad x_{st}^*x_{\I} + x_{\I}^*x_{st} = 0 .
\end{array}
\end{equation}

By using the matrix representation for quaternion numbers, problems (\ref{eq3-3}), (\ref{eq3-4}) and (\ref{eq3-5}) could be solved efficiently. For $i=1,2,\ldots, n$, we have
$$
\overrightarrow{ a^{(i)}_{st} x_{st}-x_{st}b_{st}^{(i)} }= \left[ M\left(a^{(i)}_{st}\right)-W\left(b_{st}^{(i)}\right) \right] \overrightarrow{ x_{st}}
$$
and
$$
\left| a^{(i)}_{st} x_{st}-x_{st}b_{st}^{(i)} \right|^2 = \overrightarrow{ x_{st}}^\top  \left[ M\left(a^{(i)}_{st}\right)-W\left(b_{st}^{(i)}\right) \right]^\top \left[ M\left(a^{(i)}_{st}\right)-W\left(b_{st}^{(i)}\right) \right] \overrightarrow{ x_{st}}  .
$$
Denote
\begin{equation} \label{eq_L11}
L_{11} = \sum_{i=1}^n \left[ M\left(a^{(i)}_{st}\right)-W\left(b_{st}^{(i)}\right) \right]^\top \left[ M\left(a^{(i)}_{st}\right)-W\left(b_{st}^{(i)}\right) \right].
\end{equation}
It follows that
$$
\| {\bf a}_{st} x_{st}- x_{st} {\bf b}_{st} \|_2^2 = \sum_{i=1}^n \left| a^{(i)}_{st} x_{st}-x_{st}b_{st}^{(i)} \right|^2 = \overrightarrow{x_{st}}^\top L_{11} \overrightarrow{x_{st}}.
$$
Denote the minimal eigenvalue of matrix $L_{11}$ by $\lambda_0$. As a result, problem (\ref{eq3-3}) is equivalent to finding the unit eigenvectors corrresponding to $\lambda_0$.

Similarly, for $i=1,2,\ldots, n$, we have
$$
\overrightarrow{ a_{st}^{(i)}x_{\I} +a_{\I}^{(i)}x_{st} - x_{st}b_{\I}^{(i)} - x_{\I}b_{st}^{(i)} }=  \left[ M\left(a^{(i)}_{st}\right)-W \left(b_{st}^{(i)} \right) \right] \overrightarrow{ x_{\I}} + \left[ M\left(a^{(i)}_{\I}\right)-W\left(b_{\I}^{(i)}\right) \right] \overrightarrow{ x_{st}} .
$$
Denote
\begin{equation} \label{eq_L22}
L_{22} = \sum_{i=1}^n \left[ M \left(a^{(i)}_{\I}\right)-W \left(b_{\I}^{(i)}\right)  \right]^\top \left[ M\left(a^{(i)}_{\I}\right)-W\left(b_{\I}^{(i)}\right)  \right]
\end{equation}
and
\begin{equation} \label{eq_L12}
L_{12} = \sum_{i=1}^n \left[ M \left(a^{(i)}_{st}\right)-W\left(b_{st}^{(i)}\right)  \right]^\top \left[ M\left(a^{(i)}_{\I}\right)-W\left(b_{\I}^{(i)}\right)  \right].
\end{equation}
It follows that
$$
\| {\bf f}_{\I} (x) \|_2^2 = \sum_{i=1}^n \left| a_{st}^{(i)}x_{\I} +a_{\I}^{(i)}x_{st} - x_{st}b_{\I}^{(i)} - x_{\I}b_{st}^{(i)} \right|^2 = \overrightarrow{x_{\I}}^\top L_{11} \overrightarrow{x_{\I}} + 2 \overrightarrow{x_{\I}}^\top L_{12} \overrightarrow{x_{st}} + \overrightarrow{x_{st}}^\top L_{22} \overrightarrow{x_{st}}.
$$
As a result, problem (\ref{eq3-4}) is equivalent to the optimization problem
\begin{equation} \label{eq3-6}
\begin{array}{cl}
      \min       & \overrightarrow{x_{\I}}^\top (L_{11} + \gamma I) \overrightarrow{x_{\I}} + 2 \overrightarrow{x_{\I}}^\top L_{12} \overrightarrow{x_{st}} + \overrightarrow{x_{st}}^\top (L_{22} + \gamma I) \overrightarrow{x_{st}}     \\
      \text{s.t.}& \overrightarrow{x_{st}} \in \overrightarrow{X_{st}}, \quad \overrightarrow{ x_{st}}^\top \overrightarrow{ x_{\I}} = 0 ,
\end{array}
\end{equation}
where $\overrightarrow{X_{st}}$ is the set of all the unit eigenvectors corrresponding to the minimal eigenvalue of matrix $L_{11}$. Once the set $\overrightarrow{X_{st}}$ is determinated, problem (\ref{eq3-6}) turns out to be a quadratically constrained quadratic program (QCQP).

To be specific, suppose that the dimension of the eigenspace of the minimal eigenvalue of $L_{11}$ is $k$. Let $Q \in \mathbb{R}^{4 \times k}$ be the matrix whose columns form an orthonormal basis of the eigenspace, i.e., $Q^\top Q = I_{k \times k}$. It is not difficult to see that $\overrightarrow{X_{st}} = \{ Q \y: \y^\top \y=1, \y \in \mathbb{R}^k \}$. Problem (\ref{eq3-6}) can be rewritten as
\begin{equation} \label{eq3-7}
\begin{array}{cl}
      \min       & \overrightarrow{x_{\I}}^\top (L_{11} + \gamma I) \overrightarrow{x_{\I}} + 2 \overrightarrow{x_{\I}}^\top L_{12} Q \y + \y^\top Q^\top (L_{22} + \gamma I) Q \y     \\
      \text{s.t.}& {\y}^\top \y =1, \quad \y^\top Q^\top \overrightarrow{ x_{\I}} = 0 .
\end{array}
\end{equation}
In particular, if the dimension of the eigenspace is one, i.e., $k=1$, the solution set $\overrightarrow{X_{st}} = \{ {\bf q}, -{\bf q}\} $, where $\bf q \in \mathbb{R}^4$ is the normalized basis of the eigenspace. In this case, problem (\ref{eq3-7}) could be solved efficiently by representing $\overrightarrow{ x_{\I}}$ in the orthogonal complement space of $\bf q$.

% To be specific, suppose that the eigenspace of the minimal eigenvalue of $L_{11}$ has dimension $k$. Let $Q_{1} \in \mathbb{R}^{4 \times k}$ be the matrix whose columns form an orthonormal basis of the eigenspace, i.e., $Q_{1}^\top Q_{1} = I_{k \times k}$, and $Q_{2} \in \mathbb{R}^{4 \times (4-k)}$ be the orthogonal complement of $Q_{1}$, i.e., $Q_{2}^\top Q_{1} = 0_{(4-k) \times k} $ and $Q_{2}^\top Q_{2} = I_{(4-k) \times (4-k)}$. It is not difficult to see that $\overrightarrow{X_{st}} = \{ Q_1 y: y^\top y=1, y\in \mathbb{R}^k \}$.

In the following, we reformulate problem (\ref{eq3-5}) as an optimization problem by using the matrix representation for quaternion numbers. According to Proposition \ref{p2-1}, we have
\begin{equation*}
\begin{aligned}
 & \text{Sc} \left( \left(a^{(i)}_{st} x_{st}-x_{st}b_{st}^{(i)} \right)^* \left( a_{st}^{(i)}x_{\I} +a_{\I}^{(i)}x_{st} - x_{st}b_{\I}^{(i)} - x_{\I}b_{st}^{(i)} \right)  \right) \\
 & =  \overrightarrow{ x_{st}}^\top \left[ M\left(a^{(i)}_{st}\right)-W\left(b_{st}^{(i)}\right) \right]^\top \left[ M\left(a^{(i)}_{st}\right)-W\left(b_{st}^{(i)}\right) \right] \overrightarrow{ x_{\I}} \\
 & \hspace{4cm} +  \overrightarrow{ x_{st}}^\top \left[ M\left(a^{(i)}_{st}\right)-W\left(b_{st}^{(i)}\right) \right]^\top \left[ M\left(a^{(i)}_{\I}\right)-W\left(b_{\I}^{(i)}\right) \right] \overrightarrow{ x_{st}}
\end{aligned}
\end{equation*}
for $i=1,2,\ldots, n$. It follows that
\begin{equation*}
\begin{aligned}
 \text{Sc} \left( {\bf f}_{st}^*(x) {\bf f}_{\I}(x) \right) & = \sum_{i=1}^n \text{Sc} \left(   \left(a^{(i)}_{st} x_{st}-x_{st}b_{st}^{(i)} \right)^* \left( a_{st}^{(i)}x_{\I} +a_{\I}^{(i)}x_{st} - x_{st}b_{\I}^{(i)} - x_{\I}b_{st}^{(i)} \right)  \right) \\
& = \overrightarrow{ x_{st}}^\top L_{11} \overrightarrow{ x_{\I}} + \overrightarrow{ x_{st}}^\top L_{12} \overrightarrow{ x_{st}} ,
\end{aligned}
\end{equation*}
where $L_{11}$ and $L_{12}$ are given by (\ref{eq_L11}) and (\ref{eq_L12}) respectively. Note that $\overrightarrow{X_{st}}$ is the set of all unit eigenvectors corresponding to the minimal eigenvalue $\lambda_0$ of $L_{11}$. Under the constraints of (\ref{eq3-5}), one can obtain that
$$
\overrightarrow{ x_{st}}^\top L_{11} \overrightarrow{ x_{\I}} = \lambda_0 \overrightarrow{ x_{st}}^\top \overrightarrow{ x_{\I}} = 0,
$$
since $L_{11}$ is symmetric. It turns out that problem (\ref{eq3-5}) is equivalent to the optimization problem
\begin{equation} \label{eq3-8}
\begin{array}{cl}
      \min       & \overrightarrow{ x_{st}}^\top L_{12} \overrightarrow{ x_{st}}     \\
      \text{s.t.}& \overrightarrow{x_{st}} \in \overrightarrow{X_{st}}, \quad \overrightarrow{ x_{st}}^\top \overrightarrow{ x_{\I}} = 0 .
\end{array}
\end{equation}
Similarly, if $Q$ is the matrix whose columns form an orthonormal basis of the eigenspace of $\lambda_0$, the optimal $\overrightarrow{x_{st}}$ can be derived by computing the unit eigenvectors corresponding to the minimal eigenvalue of $\text{Sym}\left(Q^\top L_{12} Q\right) = \left( Q^\top L_{12} Q + Q^\top L_{12}^\top Q \right)/2$. %, while the optimal $\overrightarrow{x_{\I}}$ can be any vector which is orthogonal to the optimal $\overrightarrow{x_{st}}$ in the sense of 2-norm for dual quaternion vectors.
Since the objective function in (\ref{eq3-8}) does not contain $\overrightarrow{x_\I}$ , the optimal $\overrightarrow{x_\I}$ can be any vector which is orthogonal to the optimal $\overrightarrow{x_{st}}$. We may need to find a proper one via sewing a patch on the optimal set of $\overrightarrow{x_\I}$, while ensuring that $\| {\bf f}_{st}(x) \|_2$ is minimized. Considering the continuity of the norm, it is naturally necessary to further search for $x_\I$ under the constrains of $\overrightarrow{x_{st}}^\top \overrightarrow{x_{\I}} = 0$, such that $\| {\bf f}_\I (x)\|_2$ is reduced as much as possible, i.e.,
%In practice, once the optimal $\overrightarrow{x_{st}}$ is determined, we try to find the best one in the optimal set of $\overrightarrow{x_\I}$ such that $\| {\bf f}_{\I} (x) \|_2^2$ is minimized, i.e.,
\begin{equation} \label{eq3-8+}
\begin{array}{cl}
      \min_{\overrightarrow{x_\I}}       & \overrightarrow{x_{\I}}^\top L_{11}  \overrightarrow{x_{\I}} + 2 \overrightarrow{x_{\I}}^\top L_{12} \overrightarrow{x_{st}} + \overrightarrow{x_{st}}^\top L_{22}  \overrightarrow{x_{st}}     \\
      \text{s.t.}& \overrightarrow{x_{st}}^\top \overrightarrow{x_{\I}} = 0 .
\end{array}
\end{equation}
{\bf This explains the role of the patching.}

Note that in this way, we give a complete description for the solution set of the hand-eye calibration problem. This is new in the hand-eye calibration literature and should be useful in applications.

To conclude, the solution method for hand-eye calibration equation $AX=XB$ is summarized in Algorithm \ref{alg_1}.
\begin{algorithm}[!ht]
\caption{Dual quaternion optimization for $AX = XB$ }
\begin{algorithmic}[1] \label{alg_1}
\REQUIRE{Motions $\left(A^{(i)}, B^{(i)}\right)_{i=1}^n$, regularization parameter $\gamma$. }
\ENSURE{The hand-eye transformation matrix $X$.}
\STATE{Construct the matrix $L_{11}, L_{22}$ and $L_{12}$ according to (\ref{eq_L11}), (\ref{eq_L22}) and (\ref{eq_L12}), respectively. }
\STATE{Compute the minimal eigenvalue $\lambda_0$ of $L_{11}$, and deduce the orthonormal basis $Q$ for the eigenspace of $\lambda_0$.}
\IF{ $\lambda_0 = 0 $ }
\STATE{Compute $x_{st}$ and $x_\I$ by solving QCQP (\ref{eq3-7}).}
\ELSE
\STATE{Compute $x_{st}$ by finding the unit eigenvector corresponding to the minimal eigenvalue of $\text{Sym}\left(Q^\top L_{12} Q \right)$.}
\STATE{Compute $x_\I$ by solving (\ref{eq3-8+}) with the optimal $x_{st}$.}
\ENDIF
\STATE{Compute $X = \left( \begin{array}{cc} R & {\bf t} \\ {\bf 0}^\top & 1 \end{array} \right)$, where $R$ is computed from $x_{st}$ by (\ref{eq_q2rot}) and $\bf t$ is computed from $x_{st}$ and $x_\I$ by using (\ref{eq_q2tv}).}
\end{algorithmic}
\end{algorithm}

%\red{The regularization and the patching techniques are necessary, otherwise we may have an unbounded solution set, which are not stable.}

%% Physical meaning of the solution

\section{Hand-Eye Calibration Equation $AX=ZB$}
In 1994, Zhuang, Roth and Sudhaker \cite{ZRS94} generalized (\ref{hand_eye_eq1}) to $AX=ZB$, where $X$ is transformation matrix from the gripper to the camera, $Z$ is the transformation matrix from the robot base to the world coordinate system, $A$ is the transformation matrix from the robot base to the gripper and $B$ is the transformation matrix from the world base to the camera. Given $n$ measurements $\left(A^{(i)}, B^{(i)}\right)_{i=1}^n$, the problem is to find the best solution $X$ and $Z$ such that
\begin{equation} \label{eq3-9}
 A^{(i)} X =Z B^{(i)}  ,
\end{equation}
for $i=1,2,\ldots, n$. The transformation matrices $X$, $Z$, $A^{(i)}$ and $B^{(i)}$ are encoded with the unit dual quaternions
$$ x=x_{st} + x_{\I} \epsilon, \quad z=z_{st} + z_{\I} \epsilon, \quad {a}^{(i)}={a}^{(i)}_{st} + a^{(i)}_{\I} \epsilon, \quad  {b}^{(i)} = b^{(i)}_{st} + b^{(i)}_{\I} \epsilon,  $$
for $i=1,2,\ldots,n$.
Let ${\bf a} = \left(a^{(1)}, a^{(2)}, \ldots, a^{(n)} \right)^\top \in \mathbb{DQ}^n$ and ${\bf b} = \left(b^{(1)}, b^{(2)}, \ldots, b^{(n)} \right)^\top \in \mathbb{DQ}^n$.
%Reformulation by dual quaternion numbers. Notations:
%\begin{itemize}
%    \item ${a}^{(i)}={a}^{(i)}_{st} + a^{(i)}_{\I} \epsilon, \  {b}^{(i)} = b^{(i)}_{st} + %b^{(i)}_{\I} \epsilon, \ x=x_{st} + x_{\I} \epsilon \in \mathbb{DQ}$
%    \item $g_i({x,z}) = {a}^{(i)} {x} - {z} {b}^{(i)} = \left(a^{(i)}_{st} x_{st}-z_{st}b_{st}^{(i)} \right) + \left( a_{st}^{(i)}x_{\I} +a_{\I}^{(i)}x_{st} - z_{st}b_{\I}^{(i)} - z_{\I}b_{st}^{(i)}   \right) \epsilon$
%    \item ${\bf g}(x,z) = \left( g_1(x, z), g_2(x, z), \ldots, g_n(x, z) \right)^\top = {\bf g}_{st}(x,z) + {\bf g}_{\I}(x,z)\epsilon$
%\end{itemize}
The hand-eye calibration problem (\ref{eq3-9}) can be reformulated as the dual quaternion optimization problem
\begin{equation} \label{eq3-10}
\begin{array}{cl}
      \min       & \| {\bf a} x- z {\bf b} \|_2     \\
      \text{s.t.}& |x| = |z| = 1 .
\end{array}
\end{equation}

Similarly, we say that the system is rotationwise noiseless if and only if
the standard part of the optimal value of (\ref{eq3-10}) is zero.

Denote ${\bf g}(x, z)={\bf a} x- z {\bf b} \in \mathbb{DQ}^n$. To solve problem (\ref{eq3-10}), according to the definition of 2-norm for dual quaternion vectors, we first consider the quaternion optimization problem
\begin{equation} \label{eq3-11}
\begin{array}{cl}
      \min       & \|{\bf g}_{st}(x, z) \|_2^2 = \| {\bf a}_{st} x_{st}- z_{st} {\bf b}_{st} \|_2^2     \\
      \text{s.t.}& |x_{st}| = |z_{st}|  = 1 .
\end{array}
\end{equation}
Note that $a = a_{st} + a_{\I} \epsilon \in \mathbb{DQ}$ is a unit dual quaternion if and only if
$ a_{st}^* a_{st}= 1$  and $ \text{Sc}\left(a_{st}^* a_{\I} \right) = a_{st}^* a_{\I} + a_{\I}^* a_{st} =  0 $.
For $i=1,2,\ldots, n$, we have
\begin{equation*}
\begin{aligned}
\left| a^{(i)}_{st} x_{st}-z_{st}b_{st}^{(i)}  \right|^2 & = \left(a^{(i)}_{st} x_{st}-z_{st}b_{st}^{(i)} \right)^* \left(a^{(i)}_{st} x_{st}-z_{st}b_{st}^{(i)} \right)  \\
& = 2 - 2 \text{Sc} \left( x_{st}^* \left(a_{st}^{(i)}\right)^* z_{st}b_{st}^{(i)}  \right) \\
& = 2 - 2  \overrightarrow{x_{st}}^\top M\left(a_{st}^{(i)}\right)^\top W \left(b_{st}^{(i)}\right) \overrightarrow{z_{st}}
\end{aligned}
\end{equation*}
since $x$, $z$, $a^{(i)}$ and $b^{(i)}$ are unit dual quaternions. Denote
\begin{equation}\label{eq_K11}
K_{11} = \sum_{i=1}^n M\left(a_{st}^{(i)}\right)^\top W\left(b_{st}^{(i)}\right) .
\end{equation}
It follows that
$$
\| {\bf a}_{st} x_{st}- z_{st} {\bf b}_{st} \|_2^2 = \sum_{i=1}^n \left| a^{(i)}_{st} x_{st}-z_{st}b_{st}^{(i)}  \right|^2 = 2n -2 \overrightarrow{x_{st}}^\top K_{11} \overrightarrow{z_{st}} .
$$
Then problem (\ref{eq3-11}) is equivalent to the optimization problem
\begin{equation} \label{eq3-12}
\begin{array}{cl}
      \max       & \overrightarrow{x_{st}}^\top K_{11} \overrightarrow{z_{st}}     \\
      \text{s.t.}& \overrightarrow{x_{st}}^\top \overrightarrow{x_{st}} = \overrightarrow{z_{st}}^\top \overrightarrow{z_{st}} = 1 .
\end{array}
\end{equation}
Denote the maximal singular value of $K_{11}$ by $\sigma_1$, the set of optimal vector pairs of (\ref{eq3-12}) by $\overrightarrow{\Omega_{st}}$. As a result, problem (\ref{eq3-11}) is to find the unit singular vector pairs for $\sigma_1$, which can be solved efficiently by the singular value decomposition (SVD).

If the optimal value of (\ref{eq3-11}) is equal to zero, i.e., $\sigma_1 = n$, consider the regularized optimization problem
\begin{equation} \label{eq3-13}
\begin{array}{cl}
      \min       & \|{\bf g}_{\I}(x, z) \|_2^2 + \gamma \left( x_{st}^* x_{st} + x_{\I}^* x_\I  + z_{st}^* z_{st} + z_{\I}^* z_\I \right)   \\
      \text{s.t.}&  \left(\overrightarrow{x_{st}}, \overrightarrow{z_{st}} \right) \in \overrightarrow{\Omega_{st}}, \quad \overrightarrow{ x_{st}}^\top \overrightarrow{ x_{\I}} = 0, \quad \overrightarrow{ z_{st}}^\top \overrightarrow{ z_{\I}} = 0,
\end{array}
\end{equation}
where $\gamma$ is the regularization parameter and $$\|{\bf g}_{\I}(x, z) \|_2^2 = \sum_{i=1}^n \left\| M\left(a_{st}^{(i)}\right) \overrightarrow{x_{\I}} + M\left(a_{\I}^{(i)}\right) \overrightarrow{x_{st}} - W\left(b_{\I}^{(i)}\right) \overrightarrow{z_{st}}  - W\left(b_{st}^{(i)}\right)  \overrightarrow{z_{\I}} \right\|_2^2 .$$
Once the set $\overrightarrow{\Omega_{st}}$ is determined, problem (\ref{eq3-13}) could be also written as an QCQP. To be specific, suppose the singular value decomposition of matrix $K_{11}$ is $K_{11}=U \Sigma V^\top$, where $U, V \in \mathbb{R}^{4 \times 4}$ are orthogonal and $\Sigma \in \mathbb{R}^{4 \times 4}$ is diagonal. Let $Q_1 \in \mathbb{R}^{4 \times k}$ be the matrix whose columns are the columns of $U$ corresponding to $\sigma_1$, and let $Q_2 \in \mathbb{R}^{4 \times k}$ be the matrix whose columns are the columns of $V$ corresponding to $\sigma_1$. It is not difficult to see that $ \overrightarrow{\Omega_{st}} = \left\{ \left( Q_1 \y, Q_2 \y \right) : \y^\top \y=1, \y \in \mathbb{R}^k \right\}$. In fact, for any unit vectors $\y_1$ and $\y_2$, the value of objective function of (\ref{eq3-12}) at the point $\left(\overrightarrow{x_{st}}, \overrightarrow{z_{st}} \right) = \left( Q_1 \y_1, Q_2 \y_2 \right)$ is
$$\overrightarrow{x_{st}}^\top K_{11} \overrightarrow{z_{st}} = \y_1^\top Q_1^\top K_{11} Q_2 \y_2 = \sigma_1 \y_1^\top \y_2 \leq \sigma_1, $$ according to the Cauchy-Schwarz inequality. Without loss of generality, we assume $\sigma_1 > 0$. Then the equality holds if and only if $\y_1 = \y_2$. As a result,
problem (\ref{eq3-13}) can be rewritten as an QCQP:
\begin{equation} \label{eq3-14}
\begin{array}{cl}
      \min       & \displaystyle \sum_{i=1}^n \left\| M\left(a_{st}^{(i)}\right) \overrightarrow{x_{\I}} + M\left(a_{\I}^{(i)}\right) Q_1 \vy  - W\left(b_{\I}^{(i)}\right) Q_2 \vy  - W\left(b_{st}^{(i)}\right)  \overrightarrow{z_{\I}} \right\|_2^2  + \gamma \left(\overrightarrow{ x_{\I}}^\top \overrightarrow{ x_{\I}} + \overrightarrow{ z_{\I}}^\top \overrightarrow{ z_{\I}} + 2 \y^\top \y \right) \\
      \text{s.t.}& \vy^\top \vy = 1, \quad \vy^\top Q_1^\top \overrightarrow{ x_{\I}} = 0, \quad \vy^\top Q_2^\top \overrightarrow{ z_{\I}} = 0.
\end{array}
\end{equation}
In particular, when $k=1$, problem (\ref{eq3-14}) could be solved efficiently by representing $\overrightarrow{ x_{\I}}$ and $\overrightarrow{ z_{\I}} $ in the corresponding orthogonal complement space of $Q_1$ and $Q_2$, respectively.

On the other hand, if the optimal value of (\ref{eq3-11}) is not equal to zero, consider the optimization problem
\begin{equation} \label{eq3-15}
\begin{array}{cl}
      \min       & \text{Sc}\left({\bf g}_{st}^*(x, z) {\bf g}_{\I}(x, z) \right) \\ %+ {\bf g}_{\I}^*(x, z){\bf g}_{st}(x, z)    \\
      \text{s.t.}&  \left(\overrightarrow{x_{st}}, \overrightarrow{z_{st}} \right) \in \overrightarrow{\Omega_{st}}, \quad \overrightarrow{ x_{st}}^\top \overrightarrow{ x_{\I}} = 0, \quad \overrightarrow{ z_{st}}^\top \overrightarrow{ z_{\I}} = 0.
\end{array}
\end{equation}
According to Corollary \ref{c2-3}, we have
$$
\text{Sc} \left( \left(a^{(i)}_{st} x_{st}\right)^* a_{st}^{(i)}x_{\I} \right) =
\text{Sc} \left( \left(a^{(i)}_{st} x_{st}\right)^* a_{\I}^{(i)}x_{st} \right) =
\text{Sc} \left( \left(z_{st} b_{st}^{(i)}\right)^* z_{st}b_{\I}^{(i)} \right) =
\text{Sc} \left( \left(z_{st} b_{st}^{(i)}\right)^* z_{\I}b_{st}^{(i)} \right) = 0
$$
since $x$, $z$, $a^{(i)}$ and $b^{(i)}$ are unit quaternions for $i=1,2,\ldots, n$.
It follows that
\begin{equation*}
\begin{aligned}
 & \text{Sc} \left( \left(a^{(i)}_{st} x_{st}-z_{st}b_{st}^{(i)} \right)^* \left( a_{st}^{(i)}x_{\I} +a_{\I}^{(i)}x_{st} - z_{st}b_{\I}^{(i)} - z_{\I}b_{st}^{(i)} \right)  \right)  \\
 & = - \text{Sc} \left( \left(a^{(i)}_{st} x_{st}\right)^* z_{st}b_{\I}^{(i)} + \left(a^{(i)}_{st} x_{st}\right)^* z_{\I}b_{st}^{(i)} + \left(z_{st} b_{st}^{(i)}\right)^* a_{st}^{(i)}x_{\I} + \left(z_{st} b_{st}^{(i)}\right)^* a_{\I}^{(i)}x_{st} \right) \\
 & = -  \overrightarrow{x_{st}}^\top \left[ M \left(a_{st}^{(i)}\right)^\top W\left(b_{\I}^{(i)}\right) + M\left(a_{\I}^{(i)}\right)^\top W\left(b_{st}^{(i)}\right) \right] \overrightarrow{z_{st}}  \\
 & \hspace{4cm} - \overrightarrow{x_{st}}^\top M\left(a_{st}^{(i)}\right)^\top W\left(b_{st}^{(i)}\right) \overrightarrow{z_{\I}} - \overrightarrow{x_{\I}}^\top M\left(a_{st}^{(i)}\right)^\top W\left(b_{st}^{(i)}\right) \overrightarrow{z_{st}}   .
\end{aligned}
\end{equation*}
Denote
\begin{equation}\label{eq_K12}
K_{12} = \sum_{i=1}^n M\left(a_{st}^{(i)}\right)^\top W\left(b_{\I}^{(i)}\right)
\end{equation}
and
\begin{equation}\label{eq_K21}
K_{21} = \sum_{i=1}^n M\left(a_{\I}^{(i)}\right)^\top W\left(b_{st}^{(i)}\right) .
\end{equation}
By simple computation, one can obtain that
\begin{equation*}
\begin{aligned}
\text{Sc}\left( {\bf g}_{st}^*(x, z) {\bf g}_{\I}(x, z)  \right) & =  \sum_{i=1}^n \text{Sc}\left( \left(a^{(i)}_{st} x_{st}-z_{st}b_{st}^{(i)} \right)^* \left( a_{st}^{(i)}x_{\I} +a_{\I}^{(i)}x_{st} - z_{st}b_{\I}^{(i)} - z_{\I}b_{st}^{(i)} \right)  \right)  \\
& = - \left[ \overrightarrow{ x_{st}}^\top (K_{12} + K_{21} ) \overrightarrow{ z_{st}} + \overrightarrow{ x_{st}}^\top K_{11}  \overrightarrow{ z_{\I}} + \overrightarrow{ x_{\I}}^\top K_{11}  \overrightarrow{ z_{st}}  \right] ,
\end{aligned}
\end{equation*}
where $K_{11}$, $K_{12}$ and $K_{21}$ are given by (\ref{eq_K11}), (\ref{eq_K12}) and (\ref{eq_K21}), respectively. Under the constraints of problem (\ref{eq3-15}), $\overrightarrow{x_{st}}$ and $\overrightarrow{z_{st}}$ are left-singular and right-singular vectors corresponding to the maximal singular value $\sigma_1$ for $K_{11}$, which means
$$
K_{11} \overrightarrow{z_{st}} = \sigma_1 \overrightarrow{x_{st}} \quad \text{and} \quad K_{11}^\top \overrightarrow{x_{st}} = \sigma_1 \overrightarrow{z_{st}} .
$$
Then we have $\overrightarrow{ x_{st}}^\top K_{11}  \overrightarrow{ z_{\I}} = \sigma_1 \overrightarrow{ z_{st}}^\top   \overrightarrow{ z_{\I}} = 0$ and $\overrightarrow{ x_{\I}}^\top K_{11}  \overrightarrow{ z_{st}} = \sigma_1 \overrightarrow{ x_{\I}}^\top  \overrightarrow{ x_{st}} = 0$ under the constraints of problem (\ref{eq3-15}).
As a result, problem (\ref{eq3-15}) is equivalent to the optimization
\begin{equation} \label{eq3-16}
\begin{array}{cl}
      \max       & \overrightarrow{ x_{st}}^\top (K_{12} + K_{21} ) \overrightarrow{ z_{st}}   \\
      \text{s.t.}&  \left(\overrightarrow{x_{st}}, \overrightarrow{z_{st}} \right) \in \overrightarrow{\Omega_{st}}, \quad \overrightarrow{ x_{st}}^\top \overrightarrow{ x_{\I}} = 0, \quad \overrightarrow{ z_{st}}^\top \overrightarrow{ z_{\I}} = 0.
\end{array}
\end{equation}
Similarly, given the singular value decomposition $K_{11} = U \Sigma V^\top$, let $Q_1$ be the matrix whose columns are the columns of $U$ corresponding to $\sigma_1$, and let $Q_2$ be the matrix whose columns are the columns of $V$ corresponding to $\sigma_1$. The optimal $\overrightarrow{x_{st}}$ and $\overrightarrow{z_{st}}$ can be derived by computing the unit eigenvectors corresponding to the maximal eigenvalue of $\text{Sym}\left( Q_1^\top (K_{12}+K_{21}) Q_2 \right)$. Since the objective funcion in (\ref{eq3-16}) does not contain $ \overrightarrow{ x_{\I}}$ and $ \overrightarrow{ z_{\I}}$, the optimal $\overrightarrow{x_{\I}}$ can be any vector which is orthogonal to the optimal $\overrightarrow{x_{st}}$, and the optimal $\overrightarrow{z_{\I}}$ can be any vector which is orthogonal to the optimal $\overrightarrow{z_{st}}$. Considering the continuity of the norm, once the optimal $\overrightarrow{x_{st}}$ and $\overrightarrow{z_{st}}$ are determined, we try to find the best one in the optimal set of $\overrightarrow{x_\I}$ and $\overrightarrow{z_\I}$ such that the patching function $\| {\bf g}_{\I} (x, z) \|_2^2$ is minimized, i.e.,
\begin{equation} \label{eq3-16+}
\begin{array}{cl}
      \max_{\overrightarrow{x_{\I}}, \overrightarrow{z_{\I}}}   & \displaystyle \sum_{i=1}^n \left\| M\left(a_{st}^{(i)}\right) \overrightarrow{x_{\I}} + M\left(a_{\I}^{(i)}\right) \overrightarrow{x_{st}} - W\left(b_{\I}^{(i)}\right) \overrightarrow{z_{st}}  - W\left(b_{st}^{(i)}\right)  \overrightarrow{z_{\I}} \right\|_2^2  \\
      \text{s.t.}& \overrightarrow{ x_{st}}^\top \overrightarrow{ x_{\I}} = 0,  \quad \overrightarrow{ z_{st}}^\top \overrightarrow{ z_{\I}} = 0.
\end{array}
\end{equation}

To conclude, the solution method for hand-eye calibration equation $AX=ZB$ is summarized in Algorithm \ref{alg_2}.
\begin{algorithm}[!ht]
\caption{Dual quaternion optimization for $AX = ZB$ }
\begin{algorithmic}[1] \label{alg_2}
\REQUIRE{Measurements $\left(A^{(i)}, B^{(i)}\right)_{i=1}^n$, regularization parameter $\gamma$. }
\ENSURE{The hand-eye transformation matrix $X$ and robort-word transformation matrix $Z$.}
\STATE{Construct the matrix $K_{11}, K_{12}$ and $K_{21}$ according to (\ref{eq_K11}), (\ref{eq_K12}) and (\ref{eq_K21}), respectively. }
\STATE{Compute SVD for $K_{11}$, and deduce the maximal singular value $\sigma_1$ with corresponding column-orthogonal matrices $Q_1$ and $Q_2$.}
\IF{ $\sigma_1 = n $ }
\STATE{Compute $x_{st}, z_{st}, x_\I$ and $z_\I$ by solving QCQP (\ref{eq3-14}).}
\ELSE
\STATE{Compute $x_{st}$ and $z_{st}$ by finding the unit eigenvector corresponding to the maximal eigenvalue of $\text{Sym}\left( Q_1^\top (K_{12}+K_{21}) Q_2 \right)$.}
\STATE{Compute $x_\I$ and $z_\I$ by solving (\ref{eq3-16+}) with the optimal $x_{st}$ and $z_{st}$.}
\ENDIF
\STATE{Compute $X$ and $Z$ from the dual quaternions $x=x_{st} + x_\I \epsilon $ and $z = z_{st} + z_\I \epsilon $ respectively.}
\end{algorithmic}
\end{algorithm}

%%%%%%%%%%%%%%%%%%%%%%%%%%%%%%%%%%%

\section{Numerical Experiments}
In this section, we report a set of synthetic experiments to show the efficiency of proposed methods for hand-eye calibration problem. All the codes are written in Matlab R2017a. The numerical experiments were done on a
desktop with an Intel Core i5-2430M CPU dual-core processor running at 2.4GHz and 6GB of RAM.

In the implementation of our proposed methods, we use GloptiPoly \cite{HLL09} to construct SDP relaxations of QCQPs, and use MOSEK \cite{Mosek} as SDP solver. Further, GloptiPoly can also recover the solution to the original problem and certify its optimality. We set the regularization parameter $\gamma = 2 \times 10^{-6}$. %for Algorithm \ref{alg_1} and $\gamma = 3$ for Algorithm \ref{alg_2}.
For hand-eye calibration model $AX=XB$, we compare our method with the direct estimation proposed by Tsai et al. \cite{TL89} (denoted by ``{\bf Tsai89}"), the Kronecker method proposed by Andreff et al. \cite{AHE99} (denoted by ``{\bf Andreff99}"), the classic dual quaternion method proposed by Daniilidis \cite{D99} (denoted by ``{\bf Daniilidis99}"), the improved dual quaternion method proposed by Malti et al. \cite{MB10} (denoted by ``{\bf Malti10}"), and the dual quaternion method using polynomial optimization proposed by Heller et al. \cite{HHP14} (denoted by ``{\bf Heller14}").

For hand-eye calibration model $AX=ZB$, we compare our method with the quaternion method proposed by Zhuang et al. \cite{ZRS94} (denoted by ``{\bf Zhuang94}"), the quaternion method proposed by Dornaika et al. \cite{DH98} (denoted by ``{\bf Dornaika98}"), the classic dual quaternion method proposed by Li et al. \cite{LWW10} (denoted by ``{\bf Li10}"), the dual quaternion method using polynomial optimization proposed by Heller et al. \cite{HHP14} (denoted by ``{\bf Heller14}"), and the dual quaternion method proposed by Li et al. \cite{LLDL18} (denoted by ``{\bf Li18}").

Numerical experiments are carried out as follows. First, the original homogeneous transformation matrices $\hat{X}$ and $\hat{Z}$ in (\ref{hand_eye_eq2}) are given by
\begin{equation} \label{eq_realX}
    \hat{X} = \left( \begin{array}{rrrr}
        0.9995  & -0.0100  &  0.0297  &  9.190   \\
        0.0116  &  0.9986  & -0.0523  &  5.397   \\
       -0.0291  &  0.0526  &  0.9982  &  0       \\ %62.628  \\
             0  &       0  &       0  &  1.0000  \\
\end{array}
\right),
\end{equation}
and
\begin{equation} \label{eq_realZ}
\hat{Z} = \left( \begin{array}{rrrr}
    0.2790  & -0.0981 &  -0.9553 &   164.226   \\
   -0.5439  &  0.8037 &  -0.2414 &   301.638   \\
    0.7914  &  0.5869 &   0.1709 &   0         \\ %-962.841 \\
         0  &       0 &        0 &   1.0000  \end{array}
\right).
\end{equation}
Second, we generate $n$ transformation matrices $A^{(i)}$, $i=1,2,\ldots, n$. Then the transformation matrix $B^{(i)}$ is computed by $B^{(i)} = \hat{Z}^{-1} A^{(i)} \hat{X}$ for $i=1,2,\ldots, n$. We use different methods to solve the hand-eye calibration equation $AX=ZB$ with the given matrices $\left(A^{(i)}, B^{(i)}\right)_{i=1}^n$.
For hand-eye calibration equation $AX=XB$, we construct $\frac{n(n-1)}{2}$ pairs of matrices $\left( \left(A^{(i)}\right)^{-1} A^{(j)}, \left(B^{(i)}\right)^{-1} B^{(j)} \right)_{i<j}$, denoted by $\left(\tilde{A}^{(s)}, \tilde{B}^{(s)}\right)_{s=1}^{n(n-1)/2}$. Then different methods are used to solve the hand-eye calibration equation $AX=XB$ with the given matrices $\left(\tilde{A}^{(s)}, \tilde{B}^{(s)}\right)_{s=1}^{n(n-1)/2}$.
%The estimation error is computed by $ e_X = \| X- X^* \|_2.$
The estimation errors are computed by
$$
e_X = \| X- \hat{X} \|_2,  \quad  e_Z = \| Z- \hat{Z} \|_2.
$$

%In the following subsections, three different ways are adopted to generate the transformation matrices $A^{(i)}$, $i = 1,2,\ldots, n$. The numerical results are also reported.

\subsection{Measurements with non-parallel rotation axis}
Four measurements of $A$ with non-parallel rotation axis are given by
\begin{equation*}
    A^{(1)} = \left( \begin{array}{rrrr}
    0.1752  & -0.6574  &  0.7329  & -10.5536 \\
    0.6325  & -0.4954  & -0.5954  & -30.5304 \\
    0.7545  &  0.5679  &  0.3290  &  50.4851 \\
         0  &       0  &       0  &  1.0000
\end{array}
\right),
\end{equation*}
\begin{equation*}
    A^{(2)} = \left( \begin{array}{rrrr}
   -0.0745  &  0.9661  &  0.2471  & -20.4123 \\
    0.8573  & -0.0645  &  0.5108  & -50.8904 \\
    0.5094  &  0.2499  & -0.8234  &  80.8685 \\
         0  &       0  &       0  &  1.0000
\end{array}
\right),
\end{equation*}
\begin{equation*}
    A^{(3)} = \left( \begin{array}{rrrr}
   -0.1456  & -0.6867  &  0.7122  & -20.5519 \\
    0.8252  & -0.4814  & -0.2955  & -30.6491 \\
    0.5458  &  0.5447  &  0.6367  &  60.4312 \\
         0  &       0  &       0  &  1.0000
\end{array}
\right),
\end{equation*}
\begin{equation*}
    A^{(4)} = \left( \begin{array}{rrrr}
   -0.1434  & -0.5250  &  0.8389  & -10.5892 \\
    0.8158  & -0.5427  & -0.2001  & -50.6730 \\
    0.5603  &  0.6557  &  0.5061  &  80.4641 \\
         0  &       0  &       0  &  1.0000
\end{array}
\right).
\end{equation*}
As described above, we have four measurements $\left(A^{(i)}, B^{(i)}\right)$ for equation $AX=ZB$, and six motions $\left(\tilde{A}^{(s)}, \tilde{B}^{(s)}\right)$ for equation $AX=XB$. The numerical results for $AX=XB$ and $AX=ZB$ with non-parallel rotation axis are reported in Tables \ref{tb_1} and \ref{tb_2}, respectively. The proposed Algorithms \ref{alg_1} and \ref{alg_2} show the best behavior in terms of estimation error.
Note that the first three methods in Table \ref{tb_1} and the first three methods in Table \ref{tb_2} get the solution via solving linear equations, while the other methods need to call SDP solvers to get the solution. That explains why Algorithm \ref{alg_1} may need more computation time to get the solution when compared with the first three methods in Table \ref{tb_1}, and Algorithm \ref{alg_2} may need more computation time to get the solution when compared with the first three methods in Table \ref{tb_2}.

\begin{table}[!th]
\caption{Numerical results for $AX=XB$ with non-parallel rotation axis} \vspace{2mm}
\label{tb_1}
\centering
\begin{tabular}{c|cccccc} \hline
        & {\bf Tsai89}  & {\bf Andreff99} & {\bf Daniilidis99}  & {\bf Malti10}   & {\bf Heller14}  &  {\bf Alg. \ref{alg_1} }   \\ \hline
$e_X$   & 0.0030 & 0.0027 & 0.0014 & 0.0019 &  0.0014 &  0.0003       \\
Time(s) & 0.0419 & 0.0188 & 0.0747 & 3.8888 &  1.4171 &  1.1649       \\
\hline
\end{tabular}
\end{table}

% the average time of 100 runs:
%   0.0022    0.0013    0.0044    0.8310    0.8834    0.7549

\begin{table}[!th]
\caption{Numerical results for $AX=ZB$ with non-parallel rotation axis} \vspace{2mm}
\label{tb_2}
\centering
\begin{tabular}{c|cccccc} \hline
        & {\bf Zhuang94}   & {\bf Dornaika98}  & {\bf Li10}   &  {\bf Heller14} & {\bf Li18}  &  {\bf Alg. \ref{alg_2}}   \\ \hline
$e_X$   & 0.0010 & 0.0362 & 0.0005 &   0.0012 &  0.0029  &  0.0004   \\
$e_Z$   & 0.0135 & 0.0712 & 0.0155 &   0.0138 &  0.0142  &  0.0132   \\
Time(s) & 0.0200 & 0.0190 & 0.0761 &  40.3641 &  1.9992  &  1.0494   \\
\hline
\end{tabular}
\end{table}

\subsection{Measurements with parallel rotation axis}
In this subsection, we test our algorithms for the case that all the axes of measurements are parallel, which is often the situation for the han-eye calibration of SCARA robots \cite{US16}. In this case, it has been shown that the problem is not well-defined and there exists a 1D manifold of equivalent solutions with identical algebraic error \cite{C91, Z98}. To evaluate the quality of solutions, we try to find the solution such that the third component of its translation vector is equal to zero, and then compare it with the real solution $\hat{X}$ and $\hat{Z}$ given by (\ref{eq_realX}) and (\ref{eq_realZ}), respectively.

Four measurements of $A$ are generated with the same rotation axis, but with different angles. Without loss of generality, the normalized rotation axis is ${\bf n} = ( 0, 0, 1)^\top $.
For $A^{(1)}, A^{(2)}, A^{(3)}, A^{(4)}$, the rotation angles are $\theta_1 = \frac{\pi}{6}$, $\theta_2=\frac{\pi}{3}$, $\theta_3 = -\frac{\pi}{6}$ and $\theta_4 -\frac{\pi}{3}$, while their translation vectors are randomly generated given by
$${\bf t}_1 = (-10.9865, 12.3788, -27.2571)^\top, \quad {\bf t}_2 = (38.8986, 84.6736, -93.8814)^\top , $$
$${\bf t}_3 = (-75.7189, -53.6187, 28.5794)^\top, \quad {\bf t}_4 = (-52.8133, 93.3732, -70.1666)^\top,$$
respectively. The numerical results for $AX=XB$ and $AX=ZB$ with parallel rotation axis are reported in Tables \ref{tb_3} and \ref{tb_4}, respectively.

\begin{table}[!th]
\caption{Numerical results for $AX=XB$ with parallel rotation axis} \vspace{2mm}
\label{tb_3}
\centering
\begin{tabular}{c|cccccc} \hline
        & {\bf Tsai89}  & {\bf Andreff99} & {\bf Daniilidis99}  & {\bf Malti10}   & {\bf Heller14}  &  {\bf Alg. \ref{alg_1} }   \\ \hline
$e_X$   & 11.8042  & 57.2739  & 0.0042  & 44.1233  & 0.0042  & 0.0040     \\
Time(s) & 0.0566   & 0.0212   & 0.2014  & 3.8501   & 1.3656  & 1.1441     \\
\hline
\end{tabular}
\end{table}

\begin{table}[!th]
\caption{Numerical results for $AX=ZB$ with parallel rotation axis} \vspace{2mm}
\label{tb_4}
\centering
\begin{tabular}{c|cccccc} \hline
        & {\bf Zhuang94}   & {\bf Dornaika98}  & {\bf Li10}   &  {\bf Heller14} & {\bf Li18}  &  {\bf Alg. \ref{alg_2}}   \\ \hline
$e_X$   & 45.3702  & 112.3677 & 0.0064 &  0.0068 &  21.8262   &  0.0023   \\
$e_Z$   & 259.5928 & 642.9365 & 0.0125 &  0.0382 &  124.8814  &  0.0128   \\
Time(s) & 0.0191   & 0.0179   & 0.0839 & 39.2364 &  2.2217    &  1.5278   \\
\hline
\end{tabular}
\end{table}

\subsection{Measurement estimation with noise}
In practice, the measurement of $B$ is typically estimated using visual processing. Since visual estimation is noisy, this set of experiment aims comparing the robustness of the different methods to disturbances in the measurement of $B$.

The four measurements $\left(A^{(i)}, B^{(i)}\right)_{i=1}^4$ are the same with that in Subsection 5.1. The rotation and translation of $B^{(i)}$ are disturbed by adding zero mean Gaussian noise with increasing standard deviation. Note that the motions $\tilde{B}^{(s)}$ are also disturbed when adding noise to the measurements $B^{(i)}$. The standard deviation of the additive noise increases from 0 to 0.02 in steps of 0.002. For each standard deviation, the average errors of $e_X$ and $e_Z$ are recorded after 10 runs of each method. The robustness testing for $AX=XB$ and $AX=ZB$ with noisy measurements of $B$ are plotted in Figures \ref{fig1} and \ref{fig2}, respectively.

\begin{figure}[!t]
\centering
\includegraphics[width=0.55\textwidth]{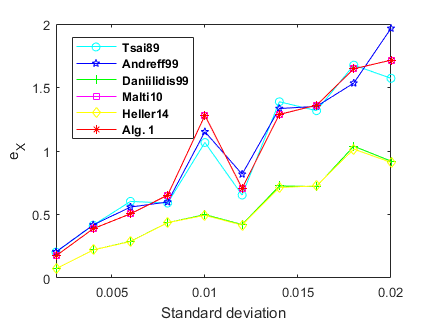}
\caption{Robustness testing for $AX=XB$.}
\label{fig1}
\end{figure}

\begin{figure}[!t]
\centering
\includegraphics[width=\textwidth]{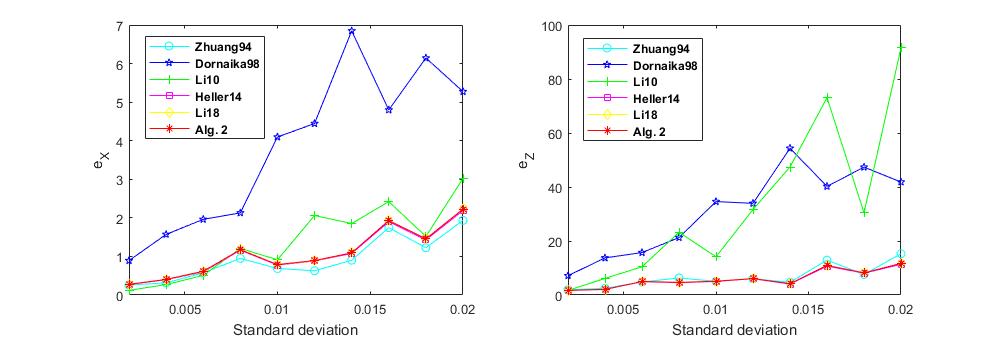}
\caption{Robustness testing for $AX=ZB$.}
\label{fig2}
\end{figure}

\section{Final Remarks}

In this paper, we establish a new dual quaternion optimization method for the hand-eye calibration problem based on the 2-norm of dual quaternion vectors. A two-stage method is also proposed by using the techniques of regularization and patching. However, there are still some problems that need further study.
We have the following final remarks.

1. Can we use some other norms for dual quaternion vectors, e.g. 1-norm, $\infty$-norm, instead of 2-norm in this method?

2. We may also consider some other hand-eye calibration models, such as multi-camera hand-eye calibration.

3. How can we choose the regularization parameter $\gamma$ to improve the efficiency of the method?

4. Can we extend this method to the simultaneous localization and mapping problem?

%We have the following final remarks.

%1. Using other norms for dual quaternion vectors, e.g. 1-norm, $\infty$-norm.

%2. Other hand-eye calibration model.

%3. How to choose the regularization parameter $\gamma$.

\bigskip

{\bf Acknowledgment.}
We are very thankful to Wei Li for helpful discussion and providing the data and Matlab code for the methods in \cite{LLDL18}.
%We would like to thank the referee who carefully read my manuscript and gave very helpful comments.

%{\bf Data availability statement}    The datasets generated during and/or analysed during the current study are available from the corresponding author on reasonable request.form.

% \vspace{100pt}


\begin{thebibliography}{10}

\bibitem{AHE99} Andreff N, Horaud R and Espiau B, ``Robot hand-eye calibration using structure-from-motion", {\sl The International Journal of Robotics Research \bf 21} (2001) 228-248.

\bibitem{C91} Chen H H, ``A screw motion approach to uniqueness analysis of head-eye geometry", {\sl 1991 IEEE Computer Society Conference on Computer Vision and Pattern Recognition} (1991) 145-151.

\bibitem{CK88} Chou J C K and Kamel M, ``Quaternions approach to solve the kinematic equation of rotation, $A_a A_x = A_x A_b$, of a sensor-mounted robotic manipulator", {\sl 1988 IEEE International Conference on Robotics and Automation} (1988) 656-662.

\bibitem{CK91} Chou J C K and Kamel M, ``Finding the position and orientation of a sensor on a robot manipulator using quaternions", {\sl The International Journal of Robotics Research} {\bf 10(3)} (1991) 240-254.

\bibitem{CB16} Condurache D and Burlacu A, ``Orthogonal dual tensor method for solving the $AX= XB$ sensor calibration problem", {\sl Mechanism and Machine Theory} {\bf 104} (2016) 382-404.

\bibitem{CC19} Condurache D and Ciureanu I A, ``A novel solution for $AX= YB$ sensor calibration problem using dual Lie algebra", {\sl 2019 6th International Conference on Control, Decision and Information Technologies (CoDIT)} (2019) 302-307.

\bibitem{D99} Daniilidis K, ``Hand-eye calibration using dual quaternions", {\sl The International Journal of Robotics Research} {\bf 18(3)} (1999) 286-298.

\bibitem{DB96} Daniilidis K and Bayro-Corrochano E, ``The dual quaternion approach to hand-eye calibration", {\sl 13th International Conference on Pattern Recognition} {\bf 1} (1996) 318-322.

\bibitem{DH98} Dornaika F and Horaud R, ``Simultaneous robot-world and hand-eye calibration", {\sl IEEE transactions on Robotics and Automation} {\bf 14(4)} (1998) 617-622.

\bibitem{EFI21} Enebuse I, Foo M, Ibrahim B K S M K, et al., ``A comparative review of hand-eye calibration techniques for vision guided robots", {\sl IEEE Access} {\bf 9} (2021) 113143-113155.

\bibitem{GKP03} Gwak S, Kim J and Park F C, ``Numerical optimization on the Euclidean group with applications to camera calibration", {\sl IEEE Transactions on Robotics and Automation} {\bf 19(1)} (2003) 65-74.

\bibitem{HHP14} Heller J, Henrion D and Pajdla T, ``Hand-eye and robot-world calibration by global polynomial optimization", {\sl 2014 IEEE International Conference on Robotics and Automation (ICRA)} (2014) 3157-3164.

\bibitem{HLL09} Henrion D, Lasserre J B and L\"{o}fberg J, ``GloptiPoly 3: moments, optimization and semidefinite programming", {\sl Optimization Methods \& Software} {\bf 24(4-5)} (2009) 761-779.

\bibitem{HD95} Horaud R and Dornaika F, ``Hand-eye calibration", {\sl The International Journal of Robotics Research} {\bf 14(3)} (1995) 195-210.

\bibitem{JLL21} Jiang J, Luo X, Luo Q, et al., ``An overview of hand-eye calibration", {\sl The International Journal of Advanced Manufacturing Technology} (2021) 1-21.

\bibitem{LWW10} Li A, Wang L and Wu D, ``Simultaneous robot-world and hand-eye calibration using dual-quaternions and Kronecker product", {\sl International Journal of Physical Sciences} {\bf 5(10)} (2010) 1530-1536.

\bibitem{LLDL18} Li W, Lv N, Dong M and Lou X, ``Simultaneous robot-world/hand-eye calibration using dual quaternion", {\sl Robot} (in Chinese) {\bf 40(3)} (2018) 301-308.

\bibitem{LM08} Liang R and Mao J, ``Hand-eye calibration with a new linear decomposition algorithm", {\sl Journal of Zhejiang University-SCIENCE A} {\bf 9(10)} (2008) 1363-1368.

\bibitem{LC95} Lu Y C and Chou J C K, ``Eight-space quaternion approach for robotic hand-eye calibration", {\sl 1995 IEEE International Conference on Systems, Man and Cybernetics. Intelligent Systems for the 21st Century} {\bf 4} (1995) 3316-3321.

\bibitem{MB10} Malti A and Barreto J P, ``Robust hand-eye calibration for computer aided medical endoscopy", {\sl 2010 IEEE International Conference on Robotics and Automation} (2010) 5543-5549.

\bibitem{Mosek} MOSEK Aps, The MOSEK optimization toolbox for MATLAB (Version 9.3). {\tt www.mosek.com}.

\bibitem{PM94} Park F C and Martin B J, ``Robot sensor calibration: solving $AX= XB$ on the Euclidean group", {\sl IEEE Transactions on Robotics and Automation} {\bf 10(5)} (1994) 717-721.

\bibitem{Q22} Qi L, ``Standard dual quaternion optimization and its applications in hand-eye calibration and SLAM", {\sl Communications on Applied Mathematics and Computaion} DOI: 10.1007/s42967-022-00213-1.

\bibitem{QLY22} Qi L, Ling C, Yan H. ``Dual quaternions and dual quaternion vectors", {\sl Communications on Applied Mathematics and Computation \bf 4} (2022) 1494-1508.



%\bibitem{QLW22} Qi L, Luo Z, Wang Q W, et al. Quaternion matrix optimization: Motivation and analysis[J]. Journal of Optimization Theory and Applications, 2022, 193(1): 621-648.

\bibitem{RDL97} Remy S, Dhome M, Lavest J M and Daucher N, ``Hand-eye calibration", {\sl 1997 IEEE/RSJ International Conference on Intelligent Robot and Systems, Innovative Robotics for Real-World Applications, IROS'97}  {\bf 2} (1997) 1057-1065.

\bibitem{SVN03} Schmidt J, Vogt F and Niemann H, ``Robust hand–eye calibration of an endoscopic surgery robot using dual quaternions", {\sl Joint Pattern Recognition Symposium} (2003) 548-556.

\bibitem{SEH12} Shah M, Eastman R D and Hong T, ``An overview of robot-sensor calibration methods for evaluation of perception systems", {\sl Proceedings of the Workshop on Performance Metrics for Intelligent Systems} (2012) 15-20.

\bibitem{S13} Shah M, ``Solving the robot-world/hand-eye calibration problem using the Kronecker product", {\sl Journal of Mechanisms and Robotics} {\bf 5(3)} (2013) 031007.

\bibitem{SA89} Shiu Y C and Ahmad S, ``Calibration of wrist-mounted robotic sensors by solving homogeneous transform equations of the form $AX= XB$", {\sl IEEE Transactions on Robotics and Automation} {\bf 5(1)} (1989) 16-29.

\bibitem{TL89} Tsai R Y and Lenz R K, ``A new technique for fully autonomous and efficient 3d robotics hand/eye calibration", {\sl IEEE Transactions on Robotics and Automation} {\bf 5(3)} (1989) 345-358.

\bibitem{US16} Ulrich M and Steger C, ``Hand-eye calibration of SCARA robots using dual quaternions", {\sl Pattern Recognition and Image Analysis} {\bf 26(1)} (2016) 231-239.

\bibitem{W92} Wang C C, ``Extrinsic calibration of a vision sensor mounted on a robot", {\sl IEEE Transactions on Robotics and Automation} {\bf 8(2)} (1992) 161-175.

\bibitem{WZ14} Wang X and Zhu H, ``On the comparisons of unit dual quaternion and homogeneous transformation matrix", {\sl Advances in Applied Clifford Algebras} {\bf 24(1)} (2014) 213-229.

\bibitem{WLZZ18} Wei M, Li Y, Zhang F and Zhao J, ``Quaternion Matrix Computations", Nova Science Publishers, New York, 2018.

\bibitem{Z98} Zhang H, ``Hand/eye calibration for electronic assembly robots", {\sl IEEE Transactions on Robotics and Automation} {\bf 14(4)} (1998) 612-616.

\bibitem{ZZY17} Zhang Z, Zhang L and Yang G Z, ``A computationally efficient method for hand–eye calibration", {\sl International Journal of Computer Assisted Radiology and Surgery} {\bf 12(10)} (2017) 1775-1787.

\bibitem{Z19} Zhao Z, ``Simultaneous robot-world and hand-eye calibration by the alternative linear programming", {\sl Pattern Recognition Letters} {\bf 127} (2019) 174-180.

\bibitem{ZRS94} Zhuang H, Roth Z and Sudhakar R, ``Simultaneous robot-world and tool/flange calibration by solving homogeneous transformation of the form $AX=YB$", {\sl IEEE Transactions on Robotics and Automation} {\bf 10(4)} (1994) 549-554.

\bibitem{ZS93} Zhuang H and Shiu Y C, ``A noise-tolerant algorithm for robotic hand-eye calibration with or without sensor orientation measurement", {\sl IEEE Transactions on Systems, Man, and Cybernetics} {\bf 23(4)} (1993) 1168-1175.



%@book{wei2018quaternion,
%  title={Quaternion Matrix Computations},
%  author={Wei, M. and Zhao, J.},
%  isbn={9781536141214},
%  lccn={2017279283},
%  series={Mathematics Research Developments},
%  url={https://books.google.com.hk/books?id=mhDAuAEACAAJ},
%  year={2018},
%  publisher={Nova Science Publishers, Incorporated}


%%%%%%%%%%%%%%%%%%%%%%%%%%%%%%%%%%%%%%%%%



\end{thebibliography}
\end{document}